 \def\blfootnote{\xdef\@thefnmark{}\@footnotetext}
\newtheorem{Theorem}{Theorem}
\newtheorem{Lemma}{Lemma}
\newtheorem{Remark}{Remark}
\newtheorem{Assumption}{Assumption}
\newtheorem{Example}{Example}
\newtheorem*{Example*}{Example}
\begin{document}
\title{Communication Efficient Federated Learning with Energy Awareness over Wireless Networks}

\author{
    \IEEEauthorblockN{Richeng Jin, \textit{Student Member, IEEE}, Xiaofan He, \textit{Member, IEEE}, Huaiyu Dai, \textit{Fellow, IEEE}}
}

\maketitle
\begin{abstract}
In federated learning (FL), reducing the communication overhead is one of the most critical challenges since the parameter server and the mobile devices share the training parameters over wireless links. With such consideration, we adopt the idea of SignSGD in which only the signs of the gradients are exchanged. Moreover, most of the existing works assume Channel State Information (CSI) available at both the mobile devices and the parameter server, and thus the mobile devices can adopt fixed transmission rates dictated by the channel capacity. In this work, only the parameter server side CSI is assumed, and channel capacity with outage is considered. In this case, an essential problem for the mobile devices is to select appropriate local processing and communication parameters (including the transmission rates) to achieve a desired balance between the overall learning performance and their energy consumption. Two optimization problems are formulated and solved, which optimize the learning performance given the energy consumption requirement, and vice versa. Furthermore, considering that the data may be distributed across the mobile devices in a highly uneven fashion in FL, a stochastic sign-based algorithm is proposed. Extensive simulations are performed to demonstrate the effectiveness of the proposed methods.
\end{abstract}
\begin{IEEEkeywords}
Federated learning, wireless communications, communication efficiency, data heterogeneity.
\end{IEEEkeywords}

{\blfootnote{R. Jin and H. Dai are with the Department of Electrical and Computer Engineering, North Carolina State University, Raleigh, NC, USA, 27695 (e-mail: \{rjin2, hdai\}@ncsu.edu).
X. He is with the Electronic Information School, Wuhan University, China (e-mail: xiaofanhe@whu.edu.cn).
}}

\section{Introduction}
\noindent To train a machine learning model, traditionally a centralized approach is adopted in which the training data are aggregated on a single machine. On the one hand, such a centralized training approach is privacy-intrusive, especially when the data are collected by mobile devices and contain the owners' sensitive information (e.g., locations, user preference on websites, social media, etc.). On the other hand, transmitting all the collected data from mobile devices may be impractical due to communication resource limitations. With such consideration, the concept of federated learning (FL), which enables training on a large corpus of decentralized data residing on mobile devices, is proposed in \cite{konevcny2016federated}.

As a distributed training approach, FL adopts the parameter server paradigm in which most of the computation is offloaded to the mobile devices in a parallel manner. During each communication round, after receiving the learning model parameters from the server, the workers (i.e., mobile devices) train their local learning models using their local data and transmit the parameter updates back to the server, which will aggregate the information from all the workers and start the next round by broadcasting the updated model parameters. In wireless networks, since all the communications between the workers and the server are over wireless links, the learning performance depends on the wireless environments as well as the workers' communication resource and energy constraints. There have been some works that study the communication aspects of FL \cite{tran2019federated,dinh2020federated,yang2020delay,zeng2020energy,yang2020energy,shi2020device,shi2020joint,chen2020convergence,wadu2020federated,vu2020cell,ren2020accelerating,chen2020joint,amiri2020update,du2020high,zheng2020design,chang2020communication,zhu2020broad,zhu2020toward,yang2020federated,amiri2020machine,amiri2020federated,zhu2020one,hosseinalipour2020federated,hosseinalipour2020multi}. {\color{black}Nonetheless, they either do not consider the existing strategies that have shown promising improvement in communication efficiency (e.g., gradient quantization \cite{bernstein2018signsgd1}) or ignore the energy consumption of the workers, the impact of transmission errors, and data heterogeneity. In addition, most of these works assume channel-state information (CSI) at both the server side and the workers side. As a result, the workers are assumed to adopt fixed transmission rates {\color{black}dictated} by the channel capacity. In this work, the idea of SignSGD with majority vote \cite{bernstein2018signsgd1} is adopted to improve the communication efficiency of the FL algorithm, in which only the signs of the parameter updates are shared between the server and the workers. The workers are assumed to transmit their parameter updates over flat-fading channels and CSI is only available at the {\color{black}server} side. Channel capacity with outage is considered and each worker is supposed to determine its transmission rate and transmission power.}

It is worth mentioning that in real-world FL applications over wireless networks, the communication time between the server and the workers is not negligible. Therefore, it becomes more critical to improve the learning performance with respect to the total training time instead of the number of communication rounds. With such consideration, the implementation of the FL algorithms given a fixed total training time is considered in this work. In such a case, the learning performance depends on the number of communication rounds that the FL algorithm can be run and the outage probabilities of the workers for each communication round. Intuitively, increasing the transmission power and decreasing the transmission rate of a worker both decrease its outage probability. However, increasing the transmission power results in higher energy consumption for communication while decreasing the transmission rate requires faster local computation (i.e., training the local FL model) given fixed time duration for each communication round, which leads to higher energy consumption for local computation. Considering that mobile devices usually have limited batteries, it is essential to minimize their energy consumption by appropriately configuring the local computation and communication parameters while satisfying the learning performance requirement (or the other way around). More specifically, our main contributions are summarized as follows.
\begin{table*}
\color{black}
\caption{Summary of the Current State of the Art}
\vspace{-0.1in}
\label{CurrentStateoftheArt}
\begin{center}
\begin{tabular}{ | c | c | c|c|c|}
\hline
{} & Multiple Access Method &Compressed Model Updates? & Client Energy Consumption? &Impact of Communication Errors? \\
\hline
\cite{zhu2020broad,zhu2020toward,yang2020federated} & Over-the-air computation & \ding{55} & \ding{55} & {\color{black}analog communication with high SNR}\\
\hline
\cite{zhu2020one,amiri2020machine,amiri2020federated} & Over-the-air computation & \ding{52} & \ding{55} & {\color{black}analog communication with high SNR}\\
\hline
\cite{yang2020delay,shi2020device,shi2020joint,chen2020convergence,wadu2020federated,ren2020accelerating} & TDMA/FDMA & \ding{55} & \ding{55} & \ding{55}\\
\hline
\cite{tran2019federated,dinh2020federated,zeng2020energy,yang2020energy,vu2020cell} & TDMA/FDMA & \ding{55} & \ding{52} & \ding{55}\\
\hline
\cite{amiri2020update,du2020high,zheng2020design,chang2020communication} & TDMA/FDMA & \ding{52} & \ding{55} & \ding{55}\\
\hline
 \cite{chen2020joint} & TDMA/FDMA & \ding{55} & \ding{52} & \ding{52}\\
\hline
This work  & TDMA/FDMA & \ding{52} & \ding{52} & \ding{52}\\
\hline
\end{tabular}
\end{center}
\vspace{-0.1in}
\end{table*}
\begin{itemize}
\item {\color{black}The impact of wireless communications on SignSGD, in particular, the probability of correct aggregation for each communication round, is analyzed. Combining our analysis on the {\color{black}performance} improvement of each communication round and the convergence rate which captures the speed that the improvement accumulates over communication rounds, a new metric is proposed to {\color{black}more accurately} measure the learning performance of SignSGD {\color{black}in practice}.}
\item {\color{black}Two optimization problems are formulated and solved. The first problem optimizes the learning performance given the energy consumption {\color{black}constraint}, while the second problem minimizes the energy consumption of the workers given the learning performance requirement.
\item The scenario with heterogeneous data distribution across the workers is considered and a stochastic sign based algorithm that can deal with {\color{black}data heterogeneity} is proposed.
\item Extensive simulations are performed to demonstrate the effectiveness of the proposed method. Particularly, compared with SignSGD and FedAVG \cite{mcmahan2017communication}, the proposed stochastic sign based algorithm achieves better learning performance while reducing the energy consumption of the workers.}
\end{itemize}
{\color{black}The remainder of this work is organized as follows. Section~\ref{Related Works} discusses related works. Section \ref{SystemModel} introduces the system model and the performance analysis of SignSGD over wireless networks. The optimization problems are formulated in Section \ref{ProblemFormulation} and the corresponding solutions are presented in Section \ref{Solutions}. Section \ref{SectionHeterogeneous} extends the proposed method to the scenario with heterogeneous data distribution across the workers. Section \ref{Simulations} presents the simulation results. Conclusions are presented in Section \ref{Conclusion}.}


\section{Related Works}\label{Related Works}
\noindent To improve the communication efficiency of the distributed learning algorithms, various methods have been proposed, including quantization \cite{alistarh2017qsgd,bernstein2018signsgd1,wu2018error,wen2017terngrad,agarwal2018cpsgd}, sparsification \cite{sattler2019sparse,sattler2019robust,wang2018atomo} and subsampling \cite{konevcny2016federated,caldas2018expanding}. However, most of these works ignore the impact of wireless environments and the resource constraints of the mobile devices, which are of vital importance in the implementation of FL algorithms over real-world wireless networks.

Recently, there have been a number of existing works that study the communication aspects of FL algorithms. In \cite{tran2019federated,dinh2020federated}, the weighted sum of the training time and the energy consumption is optimized by properly selecting the local computation and the communication parameters. {\color{black}\cite{yang2020delay} considers the minimization of the training time by properly configuring the local computation and communication parameters and the bandwidth allocation.} The energy consumption of the communications between the mobile devices and the server is also considered in \cite{zeng2020energy} and the goal is to minimize the weighted sum of the energy consumption and the number of participated mobile devices by mobile device scheduling and effective bandwidth allocation. {\color{black}\cite{yang2020energy} considers the minimization of the total energy  consumption of all the workers under a latency constraint. \cite{shi2020device} considers a joint mobile device scheduling and bandwidth allocation problem to minimize the expected FL training time, while \cite{shi2020joint} proposes a joint device scheduling and resource allocation policy to maximize the model accuracy within a given total training time budget for latency constrained wireless FL.} To further reduce the FL convergence time, \cite{chen2020convergence} incorporates artificial nueral networks (ANNs) to estimate the local FL models of the devices that are not scheduled for transmission. \cite{wadu2020federated} proposes a joint device scheduling and resource block allocation policy for FL under imperfect CSI. \cite{vu2020cell} considers a cell-free massive MIMO scenario and the training time is minimized by jointly optimizing the local computation and the communication parameters. \cite{ren2020accelerating} empirically proposes a learning efficiency metric which is a function of the mini-batch size and the time of each communication round. Resource allocation and the mini-batch size are jointly optimized to maximize the learning efficiency. \cite{chen2020joint} takes the effect of packet transmission errors into consideration and analyzes its impact on the performance of FL. A joint bandwidth allocation and mobile device selection problem is formulated and solved to minimize a FL loss function that captures the performance of the FL algorithm. {\color{black}However, in these works, the aforementioned effective strategies for improving communication efficiency are not considered. In this work, we adopt the idea of SignSGD with majority vote \cite{bernstein2018signsgd1}, which improves the communication efficiency by a {\color{black}factor} of 32.

{\color{black}To reduce the communication overhead, \cite{amiri2020update,du2020high,zheng2020design,chang2020communication} incorporate the quantization, sparsification and error compensation schemes. However, these works do not consider the energy consumption and communication errors. Another line of works \cite{zhu2020broad,zhu2020toward,yang2020federated} adopt the over-the-air computation scheme for model updates aggregation to {\color{black}improve} the communication {\color{black}efficiency}. \cite{amiri2020machine,amiri2020federated,zhu2020one} further combine model updates compression schemes with over-the-air aggregation. While over-the-air computation mechanisms demonstrate advantages in terms of bandwidth efficiency, they usually require tight synchronization and CSI information at the workers side for power equalization. Besides, none of these works consider the energy consumption of the workers. A summary of the main differences between this work and the current state of the art can be found in Table \ref{CurrentStateoftheArt}.}

}

\section{System Model}\label{SystemModel}
\noindent In this work, a wireless multi-user system consisting of one parameter server and a set of $M$ workers is considered. In particular, each worker $m \in \mathcal{M}$ stores a local dataset $\mathcal{D}_{m}$, which will be used for local training. The local dataset can be locally generated or collected through each worker's usage of mobile devices. Considering that the training of a prediction model, especially in deep learning, usually requires a large dataset, the goal of the workers is to cooperatively learn a machine learning model while keeping the local training data on their mobile devices.

\subsection{Machine Learning Model}
\noindent A typical federated learning problem with $M$ normal workers is considered. Formally, the goal is to minimize a finite-sum objective of the form
\begin{equation}
\min_{{\color{black}\boldsymbol{w}}\in \mathbb{R}^d}F({\color{black}\boldsymbol{w}})~~~~ \text{where}~~~~ F({\color{black}\boldsymbol{w}}) \overset{\mathrm{def}}{=} \frac{1}{M}\sum_{m=1}^{M}F_{m}({\color{black}\boldsymbol{w}}).
\end{equation}
For a machine learning problem, we have a sample space $I = X \times Y$, where $X$ is a space of feature vectors and $Y$ is a label space. Given the hypothesis space $\mathcal{W} \subseteq \mathbb{R}^{d}$, we define a loss function $l: \mathcal{W}\times I \rightarrow \mathbb{R}$ which measures the loss of prediction on the data point $({\color{black}\boldsymbol{x}},y) \in I$ made with the hypothesis vector ${\color{black}\boldsymbol{w}} \in \mathcal{W}$. In such a case, $F_{m}({\color{black}\boldsymbol{w}})$ is a local function defined by the local dataset of worker $m$ and the hypothesis $w$. More specifically,
\begin{equation}
F_{m}({\color{black}\boldsymbol{w}})=\frac{1}{|\mathcal{D}_{m}|}\sum_{(x_n,y_n)\in \mathcal{D}_{m}}l(w;({\color{black}\boldsymbol{x}}_n,y_n)),
\end{equation}
where $|\mathcal{D}_{m}|$ is the size of worker $m$'s local dataset $\mathcal{D}_{m}$. The loss function $l({\color{black}\boldsymbol{w}};({\color{black}\boldsymbol{x}}_n,y_n))$ depends on the learning tasks and the machine learning models. {\color{black}In this work, $F_{m}({\color{black}\boldsymbol{w}})$'s are not necessarily convex.}

To accommodate the requirement of communication efficiency in FL, we adopt the popular idea of gradient quantization as in SignSGD with majority vote \cite{bernstein2018signsgd1}, which is presented in Algorithm \ref{SIGNSGD}. At the ${\color{black}k}$-th communication round, each worker $m$ computes the gradient ${\color{black}\boldsymbol{g}}^{({\color{black}k})}_{m}$ based on its locally stored model weights ${\color{black}\boldsymbol{w}}^{({\color{black}k})}$ and the local datasets $\mathcal{D}_{m}$.\footnote{\color{black}In this work, it is assumed that each worker evaluates the gradients over its whole local dataset for simplicity (i.e., $\boldsymbol{g}_{m}^{({\color{black}k})} = \nabla F_{m}({\color{black}\boldsymbol{w}}^{({\color{black}k})}), \forall 1 \leq m \leq M$). {\color{black}The analysis and the proposed method can also be applied in the scenario where each worker computes its local gradients over a mini-batch of its local dataset.}} Then, instead of transmitting the gradient ${\color{black}\boldsymbol{g}}^{({\color{black}k})}_{m}$ directly, worker $m$ transmits $sign({\color{black}\boldsymbol{g}}^{({\color{black}k})}_{m})$ to the parameter server, in which $sign(\cdot)$ is the sign function. After receiving the shared signs of the gradients from the workers (prone to channel errors), the parameter server performs aggregation using the majority vote rule and sends the aggregated result back to the workers. Finally, the workers update their local model weights using the aggregated result.

\begin{algorithm}[!t]
\caption{SignSGD with majority vote \cite{bernstein2018signsgd1} over wireless networks}
\label{SIGNSGD}
\begin{algorithmic}[1]
\State Input: initial weight: ${\color{black}\boldsymbol{w}}^{(0)}$; number of workers: $M$; learning rate: $\eta$.
\For{$k=0,1,\cdots,K$}
    \State {\color{black}Each worker $m$ transmits $sign({\color{black}\boldsymbol{g}}^{({\color{black}k})}_{m})$ to the parameter server over wireless links, where ${\color{black}\boldsymbol{g}}^{({\color{black}k})}_{m} = \nabla F_{m}({\color{black}\boldsymbol{w}}^{({\color{black}k})})$ is the local gradient.}
    \State The parameter server obtains a noisy estimate (denoted by ${\color{black}\hat{\boldsymbol{g}}}^{({\color{black}k})}_{m}$) of the transmitted information $sign({\color{black}\boldsymbol{g}}^{({\color{black}k})}_{m})$ from each worker $m$ and sends the aggregated result ${\color{black}\tilde{\boldsymbol{g}}}^{({\color{black}k})} = sign\big(\sum_{m=1}^{M}{\color{black}\hat{\boldsymbol{g}}}^{({\color{black}k})}_{m}\big)$ back to the workers.
    \State The workers update their local models
\begin{equation}
{\color{black}\boldsymbol{w}}^{({\color{black}k+1})} = {\color{black}\boldsymbol{w}}^{({\color{black}k})} - \eta{\color{black}\tilde{\boldsymbol{g}}}^{({\color{black}k})}.
\end{equation}
\EndFor
\end{algorithmic}
\end{algorithm}

\subsection{Local Computation Model}
\noindent In this work, we consider a similar local computation model as those in \cite{tran2019federated} and \cite{chen2020joint}. Let $c_{m}$ and $f_m$ denote the number of CPU cycles required for worker $m$ to process per bit data and its CPU cycle frequency, respectively, which are assumed known to the parameter server. {\color{black}We note that the frequency essentially measures the computation speed of a CPU. In practice, the users usually adjust the CPU frequency of the devices to reduce energy consumption and prevent overheating.} Then, the CPU energy consumption of worker $m$ for the local computation of one communication round is given by \cite{burd1996processor}
\begin{equation}\label{LocalComputationEnergy}
\begin{split}
E_{m}^{cmp} = \frac{\alpha_m}{2}c_{m}D_{m}f_{m}^2,
\end{split}
\end{equation}
in which $\frac{\alpha_{m}}{2}$ is the effective capacitance coefficient of worker $m$'s computing chip, $D_{m}$ is the size of worker $m$'s training data for each communication round (in bits). In addition, the computation time for each communication round of worker $m$ is given by
\begin{equation}\label{LocalComputationTime}
T_{m}^{cmp} = \frac{c_{m}D_{m}}{f_{m}}.
\end{equation}

\subsection{Transmission Model}
\noindent In this work, it is assumed that the workers transmit their local updates (i.e., the signs of the gradients) to the parameter server via the orthogonal frequency division multiple access (OFDMA), and does not interfere with each other. Given that the parameter server has more power and bandwidth compared to the mobile devices, the downlink transmission time is ignored in this work.\footnote{Note that given a fixed transmission rate for the parameter server, the downlink transmission time is a constant that can be readily integrated to the first and the second constraints of the optimization problems (\ref{OP5}) and (\ref{OP1}), respectively, if needed.} Moreover, similar to most of the existing literature (e.g., \cite{tran2019federated,chen2020joint}), we assume that the downlink transmissions are error-free for simplicity.

For the uplink transmission, different from the existing works that consider CSI at both the transmitter and the receiver sides, we consider flat-fading channels with receiver only CSI and the capacity with outage. Capacity with outage is defined as the maximum rate that can be transmitted over a channel with a certain outage probability, which corresponds to the probability that an outage happens and the transmission cannot be decoded correctly \cite{goldsmith2005wireless}. For each worker $m$, we assume a discrete-time channel with stationary and ergodic time-varying normalized gain $\sqrt{h_{m}}$ following Rayleigh distribution, and additive white Gaussian noise (AWGN).{\color{black}\footnote{\color{black}Note that we assume Rayleigh fading for simplicity, and the study can be easily generalized to other channel models.}} Suppose that worker $m$ transmits at a rate of $r_{m}=\log_{2}(1+\gamma_{min})$, in which $\gamma_{min}$ is some fixed minimum received SNR, the data can be correctly received if the instantaneous received SNR $\gamma_{m} = \frac{P_{m}h_{m}}{N_{0}B_{m}}$ is greater than or equal to $\gamma_{min}$, in which $P_{m}$ is the transmission power of worker $m$; $N_{0}$ is the noise power spectral density and $B_{m}$ is the corresponding bandwidth. The probability of outage is thus $p_{out} = P(\gamma_{m} < \gamma_{min})$. Particularly, for Rayleigh fading channel, we have
\begin{equation}\label{p_out}
p_{out}(r_m) = 1 - e^{-\frac{(2^{r_m}-1)N_{0}B_{m}}{P_m}}.
\end{equation}

The corresponding communication time and energy consumption are given by
\begin{equation}\label{EnergyCommunication}
T_{m}^{com} = \frac{s_m}{r_{m}B_{m}}, ~~E_{m}^{com} = \frac{P_{m}s_{m}}{r_{m}B_{m}},
\end{equation}
in which $s_{m}$ is the size of the transmitted data (in bits).\footnote{Note that in the schemes where full precision gradients are transmitted, each worker is supposed to transmit $32$ bits for each element in the gradient vectors. Therefore, Algorithm \ref{SIGNSGD} leads to a 32-fold improvement in communication time and communication energy consumption.} 

For simplicity, the wireless link between each worker $m$ and the parameter server for each entry of the transmitted gradients is assumed to be a binary symmetric channel with crossover probability $p_{out}(r_{m})$. In this sense, we have
\begin{equation}\label{worst-case-outage}
\color{black}
{\color{black}\hat{\boldsymbol{g}}}^{({\color{black}k})}_{m} =
\begin{cases}
\hfill -sign({\color{black}\boldsymbol{g}}^{({\color{black}k})}_{m}),&\text{with probability $p_{out}(r_{m})$,}\\
\hfill sign({\color{black}\boldsymbol{g}}^{({\color{black}k})}_{m}), &\text{with probability $1-p_{out}(r_{m})$.}
\end{cases}
\end{equation}

\begin{Remark}
\color{black}
In (\ref{worst-case-outage}), it is assumed that for each worker $m$, $sign({\color{black}\boldsymbol{g}}^{({\color{black}k})}_{m})$ is transmitted as a single packet in the uplink and all entries of $sign({\color{black}\boldsymbol{g}}^{({\color{black}k})}_{m})$ are incorrectly decoded when an outage happens. This is considered to be the worst case scenario. One may also assume that the parameter server can detect the outage and discard the corresponding packets. In this case,
\begin{equation}\label{discard-outage}
\color{black}
{\color{black}\hat{\boldsymbol{g}}}^{({\color{black}k})}_{m} =
\begin{cases}
\hfill 0,&\text{with probability $p_{out}(r_{m})$,}\\
\hfill sign({\color{black}\boldsymbol{g}}^{({\color{black}k})}_{m}), &\text{with probability $1-p_{out}(r_{m})$.}
\end{cases}
\end{equation}

Besides, the extension to the scenarios where partial bits of the packet may be recovered is also straightforward, and won't change the fundamental tradeoffs revealed in this study. In the following performance analysis of the learning algorithms, we show the lower bound of the learning performance in the worst case scenario. That being said, our analysis on learning performance is conservative, and the derived results can be applied in the other aforementioned scenarios as well.
\end{Remark}

\color{black}
\begin{figure}
\centering
\centerline{\includegraphics[width=0.4\textwidth]{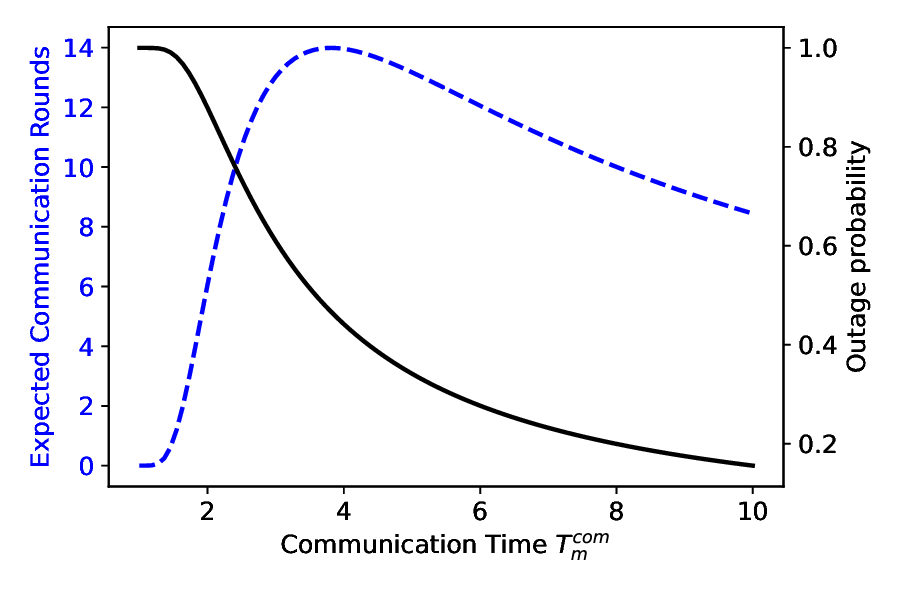}}
\caption{The blue curve shows the expected successful communication rounds for worker $m$ per 100 seconds. The black curve shows the outage probability of worker $m$. We set $s_{m} = 10^{6}$bits, $N_{0} = 10^{-8}$W/Hz; $P_{m}=0.005$W; $B_{m} = 180$kHz.}
\label{outageprobabilityroundtradeoff}
\end{figure}
According to (\ref{p_out}) and (\ref{EnergyCommunication}), {\color{black}given a total communication time, there exists a tradeoff between the number of communication rounds ${\color{black}K}\propto 1/T_{m}^{com}$ and the outage probability $p_{out}(r_{m})$ for each communication round. Particularly, the communication time $T_{m}^{com}$ for each communication round should be large enough to guarantee reliable transmission. Meanwhile, a large $T_{m}^{com}$ results in fewer communication rounds in total.} In FL, during each communication round, the parameter server sends the current global model to the workers and waits for the workers to report updates. If enough workers report within a given communication round, the server will update its global model and ignore the updates from the workers that fail to report in time \cite{bonawitz2019towards}. That being said, different from many existing applications in wireless communications, the goal of FL is not to reliably receive the information from all the workers. Instead, the parameter server intends to maximize the number of successful global model updates. Fig. \ref{outageprobabilityroundtradeoff} shows the outage probability and the expected communication rounds that worker $m$ is not in outage with respect to different communication time $T^{com}_{m}$, given a total communication time of 100 seconds.\footnote{We note that the choice of the total communication time does not change the fundamental tradeoff revealed in Fig. \ref{outageprobabilityroundtradeoff}.} It can be observed that the expected successful communication rounds achieves the highest when $T_{m}^{com} = 3.82$s and the correspond outage probability is around 46.6\%. This indicates that if the negative impact of unsuccessful communication is small (e.g., the packets in outage can be detected and discarded), the parameter server can tolerate a relatively large outage probability while achieving good learning performance.

\color{black}
\subsection{\color{black}Performance Analysis of Algorithm \ref{SIGNSGD} over Wireless Networks}\label{AnalysisOfImpact}
\noindent Before diving into the details of the system design, we first analyze how wireless communications affect the performance of Algorithm \ref{SIGNSGD}. To facilitate the analysis, the following assumptions are made.
\begin{Assumption}\label{A2}(Smoothness)
$\forall {\color{black}\boldsymbol{w}}_1,{\color{black}\boldsymbol{w}}_2$, we require for some non-negative constant $L$
\begin{equation}
F({\color{black}\boldsymbol{w}}_1) \leq F({\color{black}\boldsymbol{w}}_2) + <\nabla F({\color{black}\boldsymbol{w}}_2), {\color{black}\boldsymbol{w}}_1-{\color{black}\boldsymbol{w}}_2> + \frac{L}{2}||{\color{black}\boldsymbol{w}}_1 - {\color{black}\boldsymbol{w}}_2||^{2}_2,
\end{equation}
where $<\cdot,\cdot>$ is the standard inner product.
\end{Assumption}
{\color{black}We note that the Assumption \ref{A2} is commonly adopted in the literature (e.g., \cite{bernstein2018signsgd1}), given which the following result can be proved.}

\begin{Theorem}\label{T1}
Suppose that the model parameter at the beginning of ${\color{black}k}$-th communication round is ${\color{black}\boldsymbol{w}}^{({\color{black}k})}$, then by performing one communication round of Algorithm \ref{SIGNSGD}, we have
\begin{equation}\label{objective}
\begin{split}
&\mathbb{E}[F({\color{black}\boldsymbol{w}}^{({\color{black}k})})-F({\color{black}\boldsymbol{w}}^{({\color{black}k+1})})] \geq  -\eta||\nabla F({\color{black}\boldsymbol{w}}^{({\color{black}k})})||_{1} - \frac{L\eta^2d}{2} + 2\eta \times \\
&\sum_{i=1}^{d}|\nabla F({\color{black}\boldsymbol{w}}^{({\color{black}k})})_{i}|P\big({\color{black}\tilde{\boldsymbol{g}}}^{({\color{black}k})}_{i} = sign(\nabla F({\color{black}\boldsymbol{w}}^{({\color{black}k})}))_{i}\big),
\end{split}
\end{equation}
in which $d$ is the dimension of the gradients;  ${\color{black}\tilde{\boldsymbol{g}}}^{({\color{black}k})}_{i}$ and $\nabla F({\color{black}\boldsymbol{w}}^{({\color{black}k})})_{i}$ are the $i$-th entry of the aggregated result ${\color{black}\tilde{\boldsymbol{g}}}^{({\color{black}k})}$ and the gradient vector $\nabla F({\color{black}\boldsymbol{w}}^{({\color{black}k})})$, respectively. $sign(\cdot)_{i}$ is the $i$-th entry of the vector after taking the sign operation. The expectation and the probability are over the dynamics of the wireless channels.
\end{Theorem}
\begin{proof}
Please see Appendix \ref{ProofOfT1}.
\end{proof}

\begin{Remark}\label{Remark1_1}
Theorem \ref{T1} lower bounds the expected improvement of the learning objective during the ${\color{black}k}$-th communication round (i.e., $\mathbb{E}[F({\color{black}\boldsymbol{w}}^{({\color{black}k})})-F({\color{black}\boldsymbol{w}}^{({\color{black}k+1})})]$). Intuitively, the learning performance depends on two quantities: (1) the improvement of the learning objective during each communication round; (2) the number of communication rounds. When the data are homogeneously distributed across the workers, SignSGD converges with a rate of $O(1/{\color{black}\sqrt{K}})$ \cite{bernstein2018signsgd1}, in which ${\color{black}K}$ is the number of communication rounds.
\end{Remark}

{\color{black}Note that given ${\color{black}\boldsymbol{w}}^{({\color{black}k})}$, $F({\color{black}\boldsymbol{w}}^{({\color{black}k})})$ and $\nabla F({\color{black}\boldsymbol{w}}^{({\color{black}k})})$ are constants. Therefore, maximizing the lower bound of the expected improvement of the learning objective (i.e., the right-hand side of (\ref{objective})) is equivalent to maximizing the probabilities of correct aggregation $P({\color{black}\tilde{\boldsymbol{g}}}^{({\color{black}k})}_{i} = sign(\nabla F({\color{black}\boldsymbol{w}}^{({\color{black}k})})))_{i}, 1 \leq i \leq d$. For the ease of discussion, we consider the $i$-th entry of the gradient}{\color{black}\footnote{\color{black}Note that the following analysis is universal for any $1\leq i\leq d$.}} and define a series of random variables $\{X_{m,i}\}_{m=1}^{M}$ given by
\begin{equation}\label{DefinitionofX_m}
X_{m,i} =
\begin{cases}
1, \hfill ~~~\text{if $sign({\color{black}\hat{\boldsymbol{g}}_{m}^{({\color{black}k})}})_{i} \neq sign(\nabla F({\color{black}\boldsymbol{w}}^{({\color{black}k})}))_{i}$}, \\
0, \hfill ~~~\text{if $sign({\color{black}\hat{\boldsymbol{g}}_{m}^{({\color{black}k})}})_{i}  = sign(\nabla F({\color{black}\boldsymbol{w}}^{({\color{black}k})}))_{i}$}.
\end{cases}
\end{equation}
$X_{m,i}$ can be considered as the outcome of one Bernoulli trial ({\color{black} which is unknown to the parameter server}) {\color{black}indicating whether the $m$th worker shares the wrong sign (due to data incompleteness or channel errors) with the server.} Let $Z_{i} = \sum_{m=1}^{M}X_{m,i}$, then it can be verified that\footnote{Note that the scenario in which $F({\color{black}\boldsymbol{w}}^{({\color{black}k})})_{i}=0$ is not considered in our study for simplicity.}
\begin{equation}\label{PcorrectAggregation}
P\big({\color{black}\tilde{\boldsymbol{g}}}^{({\color{black}k})}_{i} = sign(\nabla F({\color{black}\boldsymbol{w}}^{({\color{black}k})}))_{i}\big) = P\bigg(Z_{i} < \frac{M}{2}\bigg).
\end{equation}
In addition, $Z_{i}$ follows the Poisson binomial distribution with mean $\mathbb{E}[Z_{i}] = \sum_{m=1}^{M}P(X_{m,i})$. Since $Z_{i}$ is non-negative, the Markov's inequality {\color{black}indicates}
\begin{equation}
P\big(Z_{i} \geq \frac{M}{2}\big) \leq \frac{2\mathbb{E}[Z_{i}]}{M},
\end{equation}
and therefore
\begin{equation}\label{LowerBound}
\begin{split}
P\big({\color{black}\tilde{\boldsymbol{g}}}^{({\color{black}k})}_{i} = sign(\nabla F({\color{black}\boldsymbol{w}}^{({\color{black}k})}))_{i}\big)&= 1-P\big(Z_{i} \geq \frac{M}{2}\big) \\
&\geq \frac{M - 2\mathbb{E}[Z_{i}]}{M}.
\end{split}
\end{equation}
Note that $\mathbb{E}[Z_{i}]$ and $M-\mathbb{E}[Z_{i}]$ are the expected number of workers that share wrong and correct signs, respectively. The lower bound in (\ref{LowerBound}) represents the difference between the ratios of workers that share the correct signs {\color{black}(i.e., $(M-\mathbb{E}[Z_{i}])/M$)} and that share the wrong signs {\color{black}(i.e., $\mathbb{E}[Z_{i}]/M$)}.
\begin{Remark}\label{Remark_3}
Note that the closed form of the expected improvement of the learning objective during the ${\color{black}k}$-th communication round (i.e., $\mathbb{E}[F({\color{black}\boldsymbol{w}}^{({\color{black}k})})-F({\color{black}\boldsymbol{w}}^{({\color{black}k+1})})]$) and the probabilities of correct aggregation $P({\color{black}\tilde{\boldsymbol{g}}}^{({\color{black}k})}_{i} = sign(\nabla F({\color{black}\boldsymbol{w}}^{({\color{black}k})}))_{i}), 1 \leq i \leq d$ are difficult to obtain, especially when the objective function $F(\cdot)$ is unknown. Therefore, the bound derived in (\ref{LowerBound}) is used to measure the expected improvement during each communication round. In this sense, in order to optimize the learning performance, we need to: (1) maximize $(M - 2\mathbb{E}[Z_{i}])/M$; (2) increase the total number of communication rounds given a fixed total training time (until convergence).\footnote{{\color{black}Theoretically, it can be shown that $\mathbb{E}[\frac{1}{T}\sum_{t=1}^{T}||\nabla F({\color{black}\boldsymbol{w}}^{({\color{black}k})})||_{1}] \leq O(1/{\color{black}\sqrt{K}})$.} In practice, it is usually not expected that the gradients be reduced to 0. The machine learning algorithms are said to converge when the performance stops improving {\color{black}subject to a given threshold}, which usually takes a finite number of communication rounds.}
\end{Remark}
Given any $M$, maximizing the right-hand side of (\ref{LowerBound}) is equivalent to minimizing $\mathbb{E}[Z_{i}] = \sum_{m=1}^{M}P(X_{m,i}=1)$. Let $p_{m,i}^{({\color{black}k})}$ denote the probability of $sign({\color{black}\boldsymbol{g}}^{({\color{black}k})}_{m})_{i} = sign(\nabla F({\color{black}\boldsymbol{w}}^{({\color{black}k})}))_{i}$ (i.e., the $i$-th entry of the local gradient of worker $m$ has the same sign as that of the true gradient $\nabla F({\color{black}\boldsymbol{w}}^{({\color{black}k})})$), it can be shown that
\begin{equation}\label{PX_M}
P(X_{m,i}=1) = p_{m,i}^{({\color{black}k})}p_{out}(r_{m}) + (1-p_{m,i}^{({\color{black}k})})(1-p_{out}(r_{m})).
\end{equation}
When $p_{m,i}^{({\color{black}k})} > 0.5$,\footnote{\color{black}We note that $p_{m,i}^{({\color{black}k})}$'s depend on the distribution of the local training datasets. As a result, it is impossible to theoretically obtain $p_{m,i}^{({\color{black}k})}$'s. As a special case, when all the workers have the same dataset, ${\color{black}\boldsymbol{g}}_{m}^{({\color{black}k})} = \nabla F({\color{black}\boldsymbol{w}}^{({\color{black}k})})$ and therefore $p_{m,i}^{({\color{black}k})} = 1, \forall m,t,i$. In the homogeneous data distribution setting, ${\color{black}\boldsymbol{g}}_{m}^{({\color{black}k})}$ can be considered as a noisy version of $\nabla F({\color{black}\boldsymbol{w}}^{({\color{black}k})})$. Following the assumption in \cite{bernstein2018signsgd1} that the noise is symmetric with zero mean, $p_{m,i}^{({\color{black}k})} > 0.5$ always holds. {\color{black}In addition, we note that $p_{m,i}^{({\color{black}k})} > 0.5$ still holds if each worker computes its gradients over a mini-batch of its local dataset instead of the whole local dataset.}}  {\color{black} $P(X_{m,i}=1)$ is minimized if $p_{out}(r_{m})$ is minimized}.

\section{Problem Formulation}\label{ProblemFormulation}
\noindent {\color{black}In this section, the scenario with homogeneous data distribution across the workers is considered.} According to our discussion in Section \ref{AnalysisOfImpact}, to optimize the learning performance, it is desired to minimize the outage probabilities of the workers and maximize the number of communication rounds. In this work, the implementation of the FL algorithm given a fixed total training time is considered. In this case, the number of communication rounds {\color{black}${\color{black}K}$} is inversely proportional to the time duration for each communication round {\color{black}i.e., $T_{l} \triangleq T_{m}^{cmp}+T_{m}^{com}$}. Considering that the workers (i.e., the mobile devices) have limited batteries, two optimization problems are formulated. {\color{black}The first optimization problem is of more interests to the parameter server, whose goal is to optimize the learning performance without consuming excessive energy for the workers. The second optimization problem addresses the needs of battery-constrained workers while satisfying the learning performance requirement (which may be dictated by the server).}

{\color{black}In the first optimization problem, the learning performance is optimized given the energy consumption constraint for the workers. As we discussed in Remark \ref{Remark_3}, in order to optimize the learning performance, we need to maximize $(M - 2\mathbb{E}[Z_{i}])/M$ and increase the total number of communication rounds given a fixed total training time. Considering that SignSGD converges with a rate of $O(1/{\color{black}\sqrt{K}})$, the metric ${\color{black}\sqrt{K}}(M-2\mathbb{E}[Z_{i}])/M$ is proposed to measure the learning performance. Given a fixed total training time, ${\color{black}K}$ is inversely proportional to the time duration for each communication round $T_{l}$. {\color{black}Essentially, $(M - 2\mathbb{E}[Z_{i}])/M$ captures the improvement of the learning objective at each communication round (c.f. (\ref{LowerBound})) and the impact of communication outages, and $\sqrt{K}$ (and therefore $1/\sqrt{T_{l}}$) captures the impact of the number of communication rounds. As a result, the product of these two factors characterizes the overall performance improvement. In particular, to obtain the convergence rate of $O(1/\sqrt{K})$, the learning rate in (\ref{objective}) is set as $\eta \propto 1/\sqrt{K}$ \cite{bernstein2018signsgd1}. Accumulating it over $K$ communication rounds yields the factor $\sqrt{K}$.}

In the second optimization problem, the energy consumption is minimized given the learning performance constraint (i.e., the outage probabilities of the workers and the time duration for each communication round). It can be seen from (\ref{p_out}) that given fixed bandwidth $B_{m}$ and noise power spectral density $N_{0}$, the transmission rate $r_{m}$ and the transmission power $P_{m}$ determine the outage probability of worker $m$. Increasing the transmission power $P_{m}$ and decreasing the transmission rate $r_{m}$ both decrease the outage probability. However, according to (\ref{EnergyCommunication}), a larger $P_{m}$ and a smaller $r_{m}$ result in higher communication energy consumption. In addition, given a fixed time duration for each communication round (i.e., $T_{m}^{cmp}+T_{m}^{com}$), decreasing $r_{m}$ increases the communication time $T_{m}^{com}$ and therefore requires worker $m$ to increasing the CPU frequency $f_{m}$ such that the local computation time can be reduced. As a result, the local computation energy consumption of worker $m$ also increases. By solving the second optimization problem, each worker $m$ minimizes its energy consumption by selecting appropriate local computation parameter $f_{m}$, communication parameters $P_{m}$ and $r_{m}$ while satisfying the learning performance constraint.
}

\subsection{Learning Performance Optimization Given Energy Consumption Constraint}
\noindent In the first optimization problem, it is assumed that each worker $m$ first reports its local processing CPU frequency $f_{m}$, transmission power $P_{m}$ and energy consumption limit $E_{m}$ to the parameter server. Then, the parameter server determines the time duration for each communication round $T_{l}$ and the transmission rate $r_{m}$ for each worker. According to the discussion in Section \ref{AnalysisOfImpact}, $\mathbb{E}[Z_{i}]=\sum_{m=1}^{M} p_{m,i}^{({\color{black}k})}p_{out}(r_{m}) + (1-p_{m,i}^{({\color{black}k})})(1-p_{out}(r_{m}))$, in which $p_{m,i}^{({\color{black}k})}$ is determined by the local dataset of worker $m$ and therefore unknown to the server. {\color{black}Moreover, in the homogeneous data distribution scenario, the local datasets (and therefore the local gradients) of the workers share the same distribution. In this case, $p_{m,i}^{({\color{black}k})}$'s are {\color{black}supposed} to be the same across the workers, and minimizing $\sum_{m=1}^{M}p_{out}(r_m)$ also minimizes $\mathbb{E}[Z_{i}]$}. The optimization problem is formulated as follows.
\begin{equation}\label{OP1}
\begin{aligned}
&~~~~~ \underset{T_l, r_{m}}{\text{max}}\frac{M - 2\sum_{m=1}^{M}p_{out}(r_m)}{\sqrt{T_l}} \\
& \text{s.t.}~~ \frac{\alpha_m}{2}c_{m}D_{m}f_{m}^2 + \frac{P_{m}s_{m}}{r_{m}B_{m}} \leq E_{m}, \forall m, \\
& ~~~~~~~ \max_{m}\bigg\{\frac{c_{m}D_{m}}{f_{m}} + \frac{s_m}{r_{m}B_{m}}\bigg\} \leq T_l,
\end{aligned}
\end{equation}
in which the time duration for each communication round $T_{l}$ and the transmission rate $r_{m}$ are the parameters to be optimized. $E_{m}$ is the energy consumption upper limit for worker $m$. The first constraint captures the energy consumption requirement for each worker $m$ and the second constraint captures the time duration requirement for each communication round.
Furthermore, we assume that the workers transmit with high SNR and therefore we have\footnote{\color{black}Note that this assumption is made so that (\ref{OP1}) can be solved more efficiently (see Section \ref{SolutionsA}). Its effectiveness is verified through our simulations.}
\begin{equation}\label{poutageappro}
p_{out}(r_m) \approx \frac{(2^{r_m}-1)N_{0}B_{m}}{P_m}.
\end{equation}

\begin{Remark}
\color{black}
According to (\ref{poutageappro}), given fixed transmission power $P_{m}$, $p_{out}(r_{m})$ is determined by $r_{m}$. In this sense, by solving the first optimization problem, the parameter server can obtain and send the learning performance requirements $T_{l}$ and $p_{out}(r_{m})$ to each worker $m$.
\end{Remark}


\subsection{Energy Minimization Given Learning Performance Constraint}
\noindent 
In this subsection, the energy minimization problem given the requirements for the time duration for each communication round and the outage probability of each worker is considered. Given a constraint $p_{out, m}$ on the outage probability and a constraint $T_l$ on the time duration for each communication round, the goal of worker $m$ is to minimize its energy consumption. The corresponding optimization problem is formulated as follows.
\begin{equation}\label{OP5}
\begin{aligned}
&~ \underset{f_{m}, r_{m}, P_{m}}{\text{min}}\frac{\alpha_m}{2}c_{m}D_{m}f_{m}^2 + \frac{P_{m}s_{m}}{r_{m}B_{m}} \\
& \text{s.t.}~~ \frac{c_{m}D_{m}}{f_{m}} + \frac{s_m}{r_{m}B_{m}} \leq T_l, \\
& ~~~~~ 1 - e^{-\frac{(2^{r_m}-1)N_{0}B_{m}}{P_m}} \leq p_{out,m},\\
& ~~~~~ f_{min,m} \leq f_{m} \leq f_{max,m},\\
&~~~~~ P_{min,m} \leq P_m \leq P_{max,m},
\end{aligned}
\end{equation}
The CPU frequency for local computation $f_{m}$, the transmission rate $r_{m}$ and the transmission power $P_{m}$ are the parameters to be optimized. The feasible regions of CPU frequency and transmission power of worker $m$ are imposed by the third and the fourth constraints, respectively. Considering that the time duration for each communication round is determined by the slowest worker (the straggler), $T_{l}$ is set the same for all the workers.

\begin{Remark}
\color{black}
Given a fixed total training time, the time duration requirement imposes a lower bound on the number of communication rounds, while the outage probability requirement imposes a lower bound on $(M - 2\mathbb{E}[Z_{i}])/M$. Considering that the learning performance improves as the number of communication rounds and $(M - 2\mathbb{E}[Z_{i}])/M$ increase, $T_{l}$ and $p_{out,m}$ specify the worst learning performance that the system will achieve in the considered scenario. The selection of $T_{l}$ and $p_{out,m}$ in this case will be left as our future work.
\end{Remark}



\section{{\color{black}System Parameters Configuration for SignSGD Over Wireless Networks}}\label{Solutions}

\subsection{Learning Performance Optimization Given Energy Consumption Constraint}\label{SolutionsA}
\begin{Lemma}\label{Lemma2}
In the optimization problem (\ref{OP1}), given any fixed $T_l$, the optimal transmission rate of worker $m$ is given by
\small
\begin{equation}\label{OptimalTransmissionRate}
r^{*}_{m} = \max\bigg\{\frac{P_{m}s_{m}}{B_{m}(E_{m}-\frac{\alpha_m}{2}c_{m}D_{m}f_{m}^2)}, \frac{s_{m}f_{m}}{B_{m}f_{m}T_{l}-B_{m}c_{m}D_{m}}\bigg\}.
\end{equation}
\normalsize
\end{Lemma}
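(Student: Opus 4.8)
The plan is to exploit the fact that, with $T_l$ held fixed, the maximization in (\ref{OP1}) decouples across workers and collapses to a one-dimensional monotone problem in each $r_m$. First I would note that once $T_l$ is fixed the denominator $\sqrt{T_l}$ in the objective of (\ref{OP1}) is a positive constant, so maximizing the objective is equivalent to minimizing $\sum_{m=1}^{M}p_{out}(r_m)$. Each summand $p_{out}(r_m)$ depends only on $r_m$, and the two constraints couple $r_m$ solely to the per-worker quantities $E_m$, $f_m$, $P_m$ and not to the rates of the other workers; hence the problem separates into $M$ independent subproblems, each of which minimizes $p_{out}(r_m)$ over the feasible range of $r_m$.

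Next I would establish monotonicity. From (\ref{p_out}) (equivalently, from its high-SNR approximation), $p_{out}(r_m)$ is strictly increasing in $r_m$, since $2^{r_m}$ increases with $r_m$ and $x \mapsto 1-e^{-cx}$ is increasing for $c>0$. Consequently, minimizing $p_{out}(r_m)$ amounts to taking $r_m$ as small as the constraints permit, so the optimal rate is the smallest feasible value of $r_m$.

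Then I would convert the two constraints into explicit lower bounds on $r_m$. Rearranging the energy constraint $\frac{\alpha_m}{2}c_m D_m f_m^2 + \frac{P_m s_m}{r_m B_m} \le E_m$ gives $r_m \ge \frac{P_m s_m}{B_m(E_m - \frac{\alpha_m}{2}c_m D_m f_m^2)}$, which is the first term in the claimed maximum. Rearranging the per-worker delay constraint $\frac{c_m D_m}{f_m} + \frac{s_m}{r_m B_m} \le T_l$ (implied by the $\max_m\{\cdot\}\le T_l$ constraint) gives $r_m \ge \frac{s_m f_m}{B_m(f_m T_l - c_m D_m)}$, the second term. Since the feasible set for $r_m$ is the intersection of these two half-lines, the smallest feasible rate is the larger of the two lower bounds, which yields $r_m^{*}$ as in (\ref{OptimalTransmissionRate}).

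The argument is essentially routine, and I expect no deep obstacle; the only points requiring care are (i) verifying that the denominators $E_m - \frac{\alpha_m}{2}c_m D_m f_m^2$ and $f_m T_l - c_m D_m$ are strictly positive, so that the inequality directions are preserved under the rearrangement and the bounds are well-defined, and (ii) confirming that no upper bound on $r_m$ is active, so that the common lower bound is indeed attainable. Both follow from the feasibility of (\ref{OP1}) for the given $T_l$, which guarantees that a rate satisfying both constraints exists.
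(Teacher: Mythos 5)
Your proof is correct and takes essentially the same route as the paper's: both rearrange the energy constraint and the per-round delay constraint into explicit lower bounds on $r_m$, observe that the objective is decreasing in $r_m$ (equivalently, that $p_{out}(r_m)$ is increasing), and conclude that the optimum is the larger of the two lower bounds. Your extra remarks on the per-worker decoupling and the positivity of the denominators $E_m - \frac{\alpha_m}{2}c_m D_m f_m^2$ and $f_m T_l - c_m D_m$ simply make explicit what the paper leaves implicit.
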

\begin{proof}
Please see Appendix \ref{proofOfL2}.
\end{proof}

\begin{Remark}
\color{black}
Note that (17) is infeasible when $\frac{\alpha_m}{2}c_{m}D_{m}f_{m}^2 \geq E_{m}$. In this case, the energy consumption of local computation for worker $m$ exceeds its energy consumption limit. As a result, worker $m$ will not be able to participate in the learning process.
\end{Remark}

Let $\mathcal{U} = \big\{m|\frac{P_{m}s_{m}}{B_{m}(E_{m}-\frac{\alpha_m}{2}c_{m}D_{m}f_{m}^2)} \geq \frac{s_{m}f_{m}}{B_{m}f_{m}T_{l}-B_{m}c_{m}D_{m}}\big\}$. According to Lemma \ref{Lemma2}, the workers can be divided into two groups. The optimal transmission rates of the workers in the first group (i.e., $\mathcal{U}$) is limited by their energy consumption upper limit $E_{m}$ (i.e., further decreasing the transmission rate results in the violation of the energy consumption constraint), while those of the workers in the second group is limited by the time duration for each communication round $T_{l}$ (i.e., further decreasing the transmission rate results in the violation of the time duration requirement), which is subject to design.
Further define the following two functions:
\begin{equation}
\begin{split}
g(x) &= \frac{2\sum_{m \in \mathcal{U}}\big(2^{\frac{P_{m}s_{m}}{B_{m}(E_{m}-\frac{\alpha_m}{2}c_{m}D_{m}f_{m}^2)}}-1\big)N_{0}B_{m}}{P_m\sqrt{x}} \\
&+ \frac{2\sum_{m \notin \mathcal{U}}\big(2^{\frac{s_{m}f_{m}}{B_{m}f_{m}x-B_{m}c_{m}D_{m}}}-1\big)N_{0}B_{m}}{P_m\sqrt{x}},
\end{split}
\end{equation}
\begin{equation}
h(x) = \frac{M}{\sqrt{x}}.
\end{equation}

Based on Lemma \ref{Lemma2}, the optimization problem (\ref{OP1}) can be reformulated as follows.
\begin{equation}\label{OP2}
\begin{aligned}
&\underset{T_l}{\text{min}}~~g(T_l) - h(T_l)\\
&\text{s.t.}~~ T_l \geq \max_{m}\bigg\{\frac{c_{m}D_{m}}{f_{m}}\bigg\}.
\end{aligned}
\end{equation}

It can be verified that both $g(x)$ and $h(x)$ are convex functions of $x$. Therefore, (\ref{OP2}) is a difference of convex programming problem, which can be solved by the DCA algorithm \cite{tao1997convex}.

\subsection{Energy Minimization Given Outage Probability Constraint}\label{EnergyMinimizationGivenOurage}
\noindent We note that the optimization problem (\ref{OP5}) is not always feasible. In particular, according to the time duration requirement $T_l$, it is required that $r_{m} \geq \frac{s_{m}}{(T_l-\frac{c_{m}D_{m}}{f_{max,m}})B_{m}}$. Combining it with the power constraint and plugging them into (\ref{p_out}) yields
\begin{equation}\label{Eq17}
p_{out}(r_{m}) \geq 1 - e^{-\frac{\big(2^{\frac{s_{m}}{(T_l-\frac{c_{m}D_{m}}{f_{max,m}})B_{m}}}-1\big)N_{0}B_{m}}{P_{max,m}}},
\end{equation}
which may violate the given outage constraint $p_{out, m}$ at least for some worker $m$. Therefore, two scenarios are considered.
\subsubsection{The optimization problem (\ref{OP5}) is infeasible}
In this case, {\color{black}the corresponding parameters for worker $m$ are set as} $P_{m} = P_{max,m}$, $f_{m} = f_{max,m}$ and $r_{m} = \frac{s_{m}}{(T_l-\frac{c_{m}D_{m}}{f_{max,m}})B_{m}}$.
\begin{Remark}
We note that $T_l$ and $p_{out}(r_{m})$ are the two most important parameters that determine the performance of the FL algorithm. When the optimization problem (\ref{OP5}) is infeasible {\color{black}for worker $m$}, i.e., the outage probability given in (\ref{Eq17}) exceeds $p_{out, m}$, the delay requirement and the outage probability requirement cannot be satisfied simultaneously. Since the time duration for each communication round is supposed to be determined by the slowest worker (the straggler), we assume that {\color{black}worker $m$} accommodates the time duration requirement while reducing the outage probability as best it can.
\end{Remark}

\subsubsection{The optimization problem (\ref{OP5}) is feasible}
\noindent {\color{black}(\ref{OP5}) requires a joint optimization of $f_{m}$, $r_{m}$ and $P_{m}$, in which the second condition on the outage probability is not necessarily convex. In the following, we explore the inherent structure of (\ref{OP5}) and reduce it to a convex optimization problem over single variable. Particularly, let $r_{m}^{(1)} = \log_2\big(-\frac{P_{min,m}\ln(1-p_{out,m})}{N_{0}B_{m}}+1\big)$, $r_{m}^{(2)} = \log_2\big(-\frac{P_{max,m}\ln(1-p_{out,m})}{N_{0}B_{m}}+1\big)$, and $r_{m}^{(3)} = \frac{s_{m}}{B_{m}(T_{l}-\frac{c_{m}D_{m}}{f_{max,m}})}$, the following lemma can be proved.}

\begin{Lemma}\label{Lemma1}
Given any $\max\{r_{m}^{(1)},r_{m}^{(3)}\} \leq r_{m} \leq r_{m}^{(2)}$, the optimal transmission power {\color{black}of worker $m$} is
\begin{equation}
P_{m}^{*} = -\frac{N_{0}B_{m}(2^{r_{m}}-1)}{\ln(1-p_{out,m})},
\end{equation}
{\color{black}and} the optimal CPU frequency for local computation is given by
\begin{equation}
f^{*}_{m} = \max\bigg\{\frac{c_{m}D_{m}}{T_l-\frac{s_m}{r_{m}B_{m}}}, f_{min,m}\bigg\}.
\end{equation}
\end{Lemma}

\begin{proof}
Please see Appendix \ref{proofOfL1}.
\end{proof}
\begin{Remark}
Note that $[r_{m}^{(3)}$, $r_{m}^{(2)}]$ defines the feasible region of $r_{m}$. If $r_{m} < r_{m}^{(3)}$, the time duration requirement cannot be satisfied even with the maximum $f_{m}$. Similarly, the outage probability requirement cannot be satisfied even with the maximum $P_{m}$ if $r_{m} > r_{m}^{(2)}$. $r_{m}^{(1)}$ denotes the minimum transmission rate that worker $m$ will select. {\color{black}For any feasible transmission power, the outage probability requirement in (\ref{OP5}) is satisfied (i.e., $p_{out}(r_{m}) \leq p_{out,m}$) for any $r_{m} \leq r^{(1)}_{m}$. When $r_{m} < r^{(1)}_{m}$, worker $m$ can increase $r_{m}$ and decrease $f_{m}$ accordingly to accommodate the time duration requirement $T_{l}$.} Considering that the objective function of the optimization problem (\ref{OP5}) is a decreasing (increasing) function of $r_{m}$ ($f_{m}$), a smaller objective in (\ref{OP5}) can be achieved. As a result, we have the optimal transmission rate $r_{m}^{*} \geq r^{(1)}_{m}$.
\end{Remark}
With Lemma \ref{Lemma1} at hand, the optimization problem (\ref{OP5}) can be reformulated as follows.
\begin{equation}\label{OP4}
\begin{aligned}
&\underset{r_{m}}{\text{min}}~~\frac{\alpha_{m}c_{m}D_{m}}{2}z_{m}^{2}(r_{m}) - \frac{N_{0}s_{m}(2^{r_{m}}-1)}{\ln(1-p_{out,m})r_{m}} \\
& \text{s.t.}~~~~~~~\max\{r_{m}^{(1)},r_{m}^{(3)}\} \leq r_{m} \leq r_{m}^{(2)},
\end{aligned}
\end{equation}
in which $z_{m}(r_{m}) = \max\{\frac{c_{m}D_{m}}{T_l-\frac{s_m}{r_{m}B_{m}}}, f_{min,m}\}$. It can be verified that the objective in (\ref{OP4}) is convex and therefore, the subgradient methods \cite{boyd2003subgradient} can be adopted to solve the optimization problem (\ref{OP4}).

\section{Extension to the Heterogeneous Data Distribution Scenario}\label{SectionHeterogeneous}
\noindent The discussions in the previous sections consider the scenario with homogeneous data distribution across the workers. It has been shown that SignSGD fails to converge when the the data are heterogeneously distributed across the workers even when the workers can deliver their information without any error \cite{chen2019distributed}. The following example is provided for further illustration.

\begin{Example}\label{Example1}
Suppose that the $i$-th entry of worker $m$'s gradient is given as follows\footnote{Recall that in the homogeneous data distribution setting, the gradients of the workers are considered noisy versions of $\nabla F({\color{black}\boldsymbol{w}}^{({\color{black}k})})$. As a result, such a scenario as (\ref{HeterogeneousGradientExample}) happens with a small probability. {\color{black}In the heterogeneous data distribution setting, $p_{m,i}^{({\color{black}k})}$'s depend on the local datasets of the workers and may be very different from those in the homogeneous data distribution setting.}}
\begin{equation}\label{HeterogeneousGradientExample}
\nabla F_{m}({\color{black}\boldsymbol{w}}^{({\color{black}k})})_{i} =
\begin{cases}
-1, ~~~\text{if $1\leq m\leq M-1$}, \\
M, ~~~~~~~~\text{if $m=M$}.
\end{cases}
\end{equation}
In this case, we have
\begin{equation}
\begin{split}
sign(\nabla F({\color{black}\boldsymbol{w}}^{({\color{black}k})}))_{i} &= sign\bigg(\frac{1}{M}\sum_{m=1}^{M}\nabla F_{m}({\color{black}\boldsymbol{w}}^{({\color{black}k})})\bigg)_{i} \\
&= sign\bigg(\frac{1}{M}\bigg) = 1
\end{split}
\end{equation}
It can be easily verified that {\color{black}(c.f. (\ref{DefinitionofX_m}))}
\begin{equation}\label{eq30}
\begin{split}
P(X_{m,i}=1) &= P(sign({\color{black}\hat{\boldsymbol{g}}_{m}^{({\color{black}k})}})_{i}\neq sign(\nabla F({\color{black}\boldsymbol{w}}^{({\color{black}k})}))_{i}) \\
&=
\begin{cases}
1-p_{out}(r_{m}), ~~~~~~~\text{if $1\leq m\leq M-1$}, \\
p_{out}(r_{m}), ~~~~~~~~~~~~~~~\text{if $m=M$}.
\end{cases}
\end{split}
\end{equation}
Essentially, {\color{black}it can be observed that the aggregation result is wrong even free of channel errors.\footnote{\color{black}Note that (\ref{eq30}) is based on our worst case scenario analysis given by (\ref{worst-case-outage}). The same conclusion holds in general for other scenario (e.g., (\ref{discard-outage})) as well.}}
\end{Example}

In Example \ref{Example1}, it can be observed that for worker $m \in \{1,2,\cdots,M-1\}$, a smaller $p_{out}(r_{m})$ results in a larger $P(X_{m,i}=1)$ and smaller $(M - 2\mathbb{E}[Z_{i}])/M$, in sharp contrast with the homogeneous case. In this case, the parameter server obtains wrong aggregation results even if all the workers deliver their information without any error. In addition, when the data are heterogeneously distributed across the workers, the probability of such scenarios that lead to wrong aggregation is unknown since neither the parameter server nor the workers {\color{black}have} knowledge about the global objective function $F(\cdot)$. As a result, the convergence of Algorithm \ref{SIGNSGD} cannot be guaranteed. Therefore, it is of vital importance to develop an algorithm that can deal with heterogeneous data distribution across the workers.
With such consideration, a stochastic sign based algorithm (i.e., Algorithm \ref{noisySIGNSGD}), termed as Stochastic SignSGD with majority vote, is proposed. In particular, compared to Algorithm \ref{SIGNSGD}, there is a pre-processing step (i.e., step 3) in Algorithm \ref{noisySIGNSGD}, in which each worker projects each entry of the locally obtained gradient to -1 and +1 with certain probabilities, respectively. {\color{black}More specifically, each worker $m$ computes its gradient $\nabla F_{m}({\color{black}\boldsymbol{w}}^{({\color{black}k})})$ and obtains its expected transmission rate $r_{m}$ (either provided by the parameter server after solving (\ref{OP1}) or obtained by solving (\ref{OP5})), after which the outage probability $p_{out}(r_{m})$ can be estimated through (\ref{p_out}), and the pre-processing parameter $p_{m}^{i}$'s can be calculated accordingly.} The aforementioned issue is alleviated by the stochasticity of the projection. Taking $M=3$ in Example \ref{Example1} as an example, it can be verified that
\begin{equation}\label{p_mt}
\begin{split}
p_{m,i}^{({\color{black}k})} &= P\Big(sign({\color{black}\boldsymbol{g}}_{m}^{({\color{black}k})})_{i}=sign(\nabla F({\color{black}\boldsymbol{w}}^{({\color{black}k})}))_{i}\Big) \\&=
\begin{cases}
\frac{\frac{1}{2}-p_{out}(r_m)-b|\nabla F_{m}({\color{black}\boldsymbol{w}}^{({\color{black}k})})_{i}|}{1-2p_{out}(r_m)}, ~~~\text{if $1\leq m\leq 2$},\\
\frac{\frac{1}{2}-p_{out}(r_m)+b|\nabla F_{m}({\color{black}\boldsymbol{w}}^{({\color{black}k})})_{i}|}{1-2p_{out}(r_m)}, ~~~~~\text{if $m=3$}.
\end{cases}
\end{split}
\end{equation}
Plugging (\ref{p_mt}) into (\ref{PX_M}) yields
\begin{equation}
P(X_{m,i} = 1) =
\begin{cases}
\frac{1}{2}+b, ~~~\text{if $1\leq m\leq 2$}, \\
\frac{1}{2}-3b, ~~~~~\text{if $m=3$}.
\end{cases}
\end{equation}

Therefore, {\color{black}the probability of correct aggregation (c.f. (\ref{PcorrectAggregation})) is given by}
\begin{equation}
\begin{split}
&P\bigg(Z_{i} < \frac{3}{2}\bigg) = P\bigg(\sum_{m=1}^{3}X_{m,i} = 1\bigg) + P\bigg(\sum_{m=1}^{3}X_{m,i} = 0\bigg) \\
&= 2\bigg(\frac{1}{2}+b\bigg)\bigg(\frac{1}{2}-b\bigg)\bigg(\frac{1}{2}+3b\bigg)+\bigg(\frac{1}{2}-b\bigg)^2\bigg(\frac{1}{2}-3b\bigg)\\
&+\bigg(\frac{1}{2}-b\bigg)^2\bigg(\frac{1}{2}+3b\bigg)\\
&=\frac{1}{2} + \frac{1}{2}b-6b^3.
\end{split}
\end{equation}

It can be verified that when $0 < b < \frac{1}{\sqrt{12}}$, $P(Z_{i} < 3/2) > \frac{1}{2}$. {\color{black}That being said, the probability of correct aggregation is strictly larger than $\frac{1}{2}$ when $b$ is small enough,} based on which the convergence of Algorithm \ref{noisySIGNSGD} can be established \cite{jin2020stochastic}. {\color{black}For more general scenarios, the following lemma can be proved.}




\begin{algorithm}[!t]
\caption{Stochastic SignSGD with majority vote over wireless networks}
\label{noisySIGNSGD}
\begin{algorithmic}[1]
\State Input: initial weight: $w_0$; number of workers: $M$; learning rate: $\eta$.
\For{{\color{black}$k=0,1,\cdots,K$}}
    \State {\color{black} Each worker $m$ transmits $sign({\color{black}\boldsymbol{g}}^{({\color{black}k})}_{m})$ to the parameter server over wireless links, where ${\color{black}\boldsymbol{g}}^{({\color{black}k})}_{m}$ is obtained as follows: each worker $m$ first obtains its gradient $\nabla F_{m}({\color{black}\boldsymbol{w}}^{({\color{black}k})})$. Then, it estimates its outage probability $p_{out}(r_{m})$ and does the following pre-processing
\begin{equation}\label{pre-processing}
({\color{black}\boldsymbol{g}}^{({\color{black}k})}_{m})_{i} =
\begin{cases}
\hfill -sign(\nabla F_{m}({\color{black}\boldsymbol{w}}^{({\color{black}k})}))_{i}, &\text{with probability $p_{m}^{i}$},\\
\hfill sign(\nabla F_{m}({\color{black}\boldsymbol{w}}^{({\color{black}k})}))_{i}, &\text{with probability $1-p_{m}^{i}$},
\end{cases}
\end{equation}
where $p_{m}^{i}=\frac{\frac{1}{2}-p_{out}(r_m)-b|\nabla F_{m}({\color{black}\boldsymbol{w}}^{({\color{black}k})})_{i}|}{1-2p_{out}(r_m)}$ and $b$ is a parameter subject to design. Particularly, $0 < b < \frac{1-2p_{out}(r_m)}{2|\nabla F_{m}({\color{black}\boldsymbol{w}}^{({\color{black}k})})_{i}|}$ such that $p^{i}_{m} \in (0,\frac{1}{2})$.
}

    \State The parameter server obtains a noisy estimate (denoted by ${\color{black}\hat{\boldsymbol{g}}}^{({\color{black}k})}_{m}$) of the transmitted information $sign({\color{black}\boldsymbol{g}}^{({\color{black}k})}_{m})$ from each worker $m$ and sends the aggregated result ${\color{black}\tilde{\boldsymbol{g}}}^{({\color{black}k})} = sign\big(\frac{1}{M}\sum_{m=1}^{M}{\color{black}\hat{\boldsymbol{g}}}^{({\color{black}k})}_{m}\big)$ back to the workers.
    \State The workers update their local models
\begin{equation}
{\color{black}\boldsymbol{w}}^{({\color{black}k+1})} = {\color{black}\boldsymbol{w}}^{({\color{black}k})} - \eta{\color{black}\tilde{\boldsymbol{g}}}^{({\color{black}k})}.
\end{equation}
\EndFor
\end{algorithmic}
\end{algorithm}

{\color{black}
\begin{Lemma}\label{LemmaHeterogeneous2}
\color{black}
When $p_{out}(r_m) \leq \min_{i}\{\frac{1}{2}-b|\nabla F_{m}({\color{black}\boldsymbol{w}}^{({\color{black}k})})_{i}|\}$, in which $\nabla F_{m}({\color{black}\boldsymbol{w}}^{({\color{black}k})})_{i}$ is the $i$-th entry of the gradient $\nabla F_{m}({\color{black}\boldsymbol{w}}^{({\color{black}k})})$, we have
\begin{equation}\label{probone}
\begin{split}
&P\big({\color{black}\tilde{\boldsymbol{g}}}^{({\color{black}k})}_{i} = 1\big) \\&=
\begin{cases}
\frac{1}{2} + \frac{{M-1 \choose \frac{M-1}{2}}\sum_{m=1}^{M}b\nabla F_{m}({\color{black}\boldsymbol{w}}^{({\color{black}k})})_{i}}{2^{M}} + O\big(\frac{b^2}{2^{M}}\big),\\~~~~~~~~~~~~~~~~~~~~~~~~~~~~~~~~~~~~~~~~~~~~~\text{when $M$ is odd,} \\
\frac{1}{2} + \frac{\big[{M-1 \choose \frac{M-2}{2}}+{M-1 \choose \frac{M}{2}}\big]\sum_{m=1}^{M}b\nabla F_{m}({\color{black}\boldsymbol{w}}^{({\color{black}k})})_{i}}{2^{M+1}} + O\big(\frac{b^2}{2^{M}}\big), \\~~~~~~~~~~~~~~~~~~~~~~~~~~~~~~~~~~~~~~~~~~~~~\text{when $M$ is even.}
\end{cases}
\end{split}
\end{equation}
where ${\color{black}\tilde{\boldsymbol{g}}}^{({\color{black}k})}_{i}$ is the $i$-th entry of the aggregated result at the parameter server's side{\color{black}, and the parameter server breaks the tie when there is no winner in the majority vote by selecting $-1$ and $+1$ uniformly at random.}
\end{Lemma}
\begin{proof}
Please see Appendix \ref{ProofofLemma32}.
\end{proof}

\begin{Remark}\label{Remark6}
(\ref{probone}) measures the probability of the $i$-th entry of the aggregated result being 1. If the second term in (\ref{probone}) dominates (i.e., $b$ is sufficiently small), $P\big({\color{black}\tilde{\boldsymbol{g}}}^{({\color{black}k})}_{i} = 1\big) > \frac{1}{2}$ when $\sum_{m=1}^{M}\nabla F_{m}({\color{black}\boldsymbol{w}}^{({\color{black}k})})_{i} > 0$; $P\big({\color{black}\tilde{\boldsymbol{g}}}^{({\color{black}k})}_{i} = 1\big) < \frac{1}{2}$ when $\sum_{m=1}^{M}\nabla F_{m}({\color{black}\boldsymbol{w}}^{({\color{black}k})})_{i} < 0$. That being said, the probability of wrong aggregation is always smaller than 1/2. {\color{black}As a result, the convergence of Algorithm \ref{noisySIGNSGD} can be established.}

{\color{black}When the workers compute their local gradients (denoted by $\nabla \tilde{F}_{m}(\boldsymbol{w}^{(k)})$'s) over a mini-batch of their local training data, {\color{black}$\nabla F_{m}(\boldsymbol{w}^{(k)})_{i}$ in Lemma 3 will be replaced by $\nabla \tilde{F}_{m}(\boldsymbol{w}^{(k)})_{i}$. That being said, it can be shown that $P(sign(\tilde{\boldsymbol{g}}_{i}^{(k)})) \neq sign(\sum_{m=1}^{M}\nabla \tilde{F}_{m}(\boldsymbol{w}^{(k)}))_{i} <\frac{1}{2}$, when $b$ is sufficiently small.} In this case, $\sum_{m=1}^{M}\nabla \tilde{F}_{m}(\boldsymbol{w}^{(k)})$ can be understood as a noisy version of $\sum_{m=1}^{M}\nabla F_{m}(\boldsymbol{w}^{(k)})$ due to the training data sampling. Following the same assumption in \cite{bernstein2018signsgd1} that the sampling noise is symmetric with zero mean, $P(sign(\sum_{m=1}^{M}\nabla \tilde{F}_{m}(\boldsymbol{w}^{(k)}))_{i}\neq sign(\sum_{m=1}^{M}\nabla F_{m}(\boldsymbol{w}^{(k)}))_{i}) < \frac{1}{2}$ always holds. As a result, it can also be shown that the probability of wrong aggregation is always smaller than 1/2.}
\end{Remark}

Given Lemma \ref{LemmaHeterogeneous2}, the following theorem can be proved according to \cite{jin2020stochastic}.
\begin{Theorem}\label{Theorem2}
Suppose Assumption \ref{A2} is satisfied and the learning rate is set as $\eta=\frac{1}{\sqrt{dT}}$, then by running Algorithm \ref{noisySIGNSGD} for ${\color{black}K}$ iterations, we have
\begin{equation}
\begin{split}
\frac{1}{T}\sum_{t=1}^{T}c||\nabla F({\color{black}\boldsymbol{w}}^{({\color{black}k})})||_{1} &\leq \frac{\mathbb{E}[F({\color{black}\boldsymbol{w}}^{(0)}) - F({\color{black}\boldsymbol{w}}^{({\color{black}k+1})})]\sqrt{d}}{{\color{black}\sqrt{K}}},
\end{split}
\end{equation}
where $c$ is some positive constant.
\end{Theorem}

}

\begin{figure*}[ht]
\begin{subfigure}{.33\textwidth}
  \centering
  \includegraphics[width=.95\linewidth]{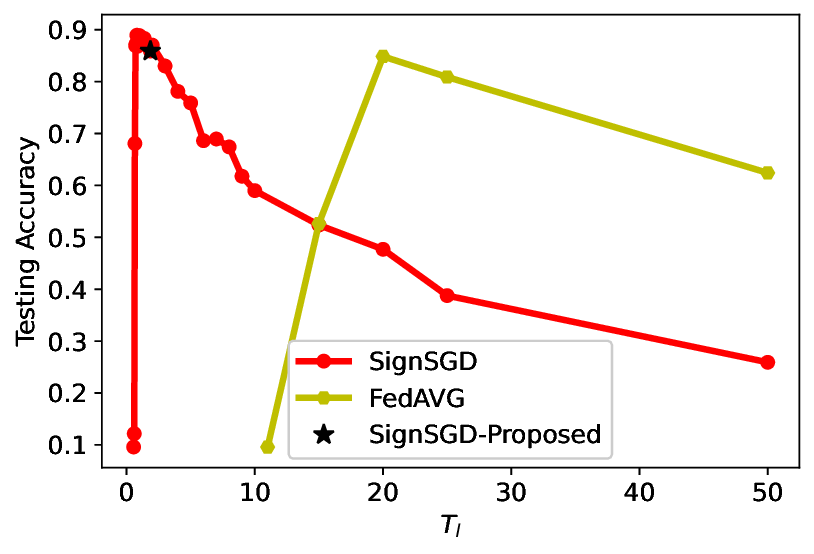}
  \caption{$P_{m}=0.005$W}
  \label{fig:sub-first}
\end{subfigure}
\begin{subfigure}{.33\textwidth}
  \centering
  \includegraphics[width=.95\linewidth]{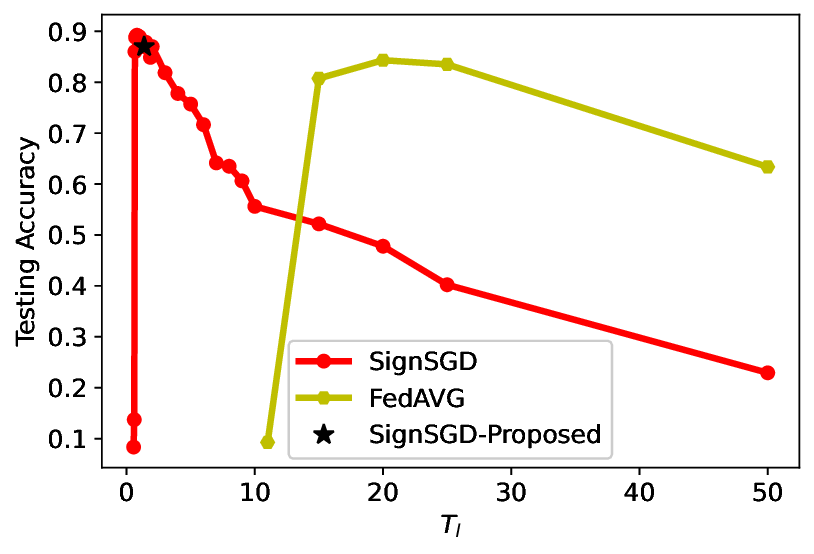}
  \caption{$P_{m}=0.01$W}
  \label{fig:sub-second}
\end{subfigure}
\begin{subfigure}{.33\textwidth}
  \centering
  \includegraphics[width=.95\linewidth]{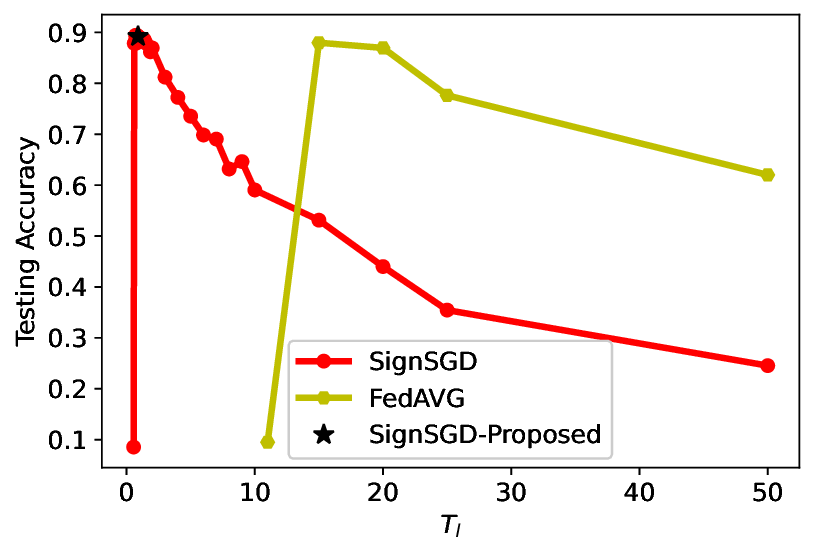}
  \caption{$P_{m}=0.05$W}
  \label{fig:sub-third}
\end{subfigure}
\caption{The impact of communication round duration $T_{l}$ on the learning algorithms given different transmission power $P_{m}$.}
\label{fig:fig}
\end{figure*}



{\color{black}It is worth mentioning that according to Lemma \ref{LemmaHeterogeneous2} and Theorem \ref{Theorem2}, the convergence of Algorithm \ref{noisySIGNSGD} is based on the condition $p_{out}(r_m) \leq \min_{i}\{\frac{1}{2}-b|\nabla F_{m}({\color{black}\boldsymbol{w}}^{({\color{black}k})})_{i}|\}$. Therefore, in order to optimize the learning performance, the corresponding constraint needs to be added to the optimization problem. In the meantime, $b$ should also be optimized. However, since $\nabla F_{m}({\color{black}\boldsymbol{w}}^{({\color{black}k})})$'s are unknown to the server, optimizing $b$ and the learning performance of Algorithm \ref{noisySIGNSGD} is highly non-trivial and left as our future work.}


{\color{black}With such consideration, in this work, we mainly consider the energy consumption minimization problem given predetermined $b$,\footnote{In the implementation of Algorithm \ref{noisySIGNSGD}, it is possible that the predetermined $b\geq \frac{1-2p_{out}(r_m)}{2|\nabla F_{m}({\color{black}\boldsymbol{w}}^{({\color{black}k})})_{i}|}$ for the $i$-th entry of worker $m$'s gradient such that $p_{m}^{i} \leq 0$. In this case, we round $p_{m}^{i}$ to 0, and it can be verified that (\ref{pre-processing}) is reduced to $({\color{black}\boldsymbol{g}}^{({\color{black}k})}_{m})_{i} = sign(\nabla F_{m}({\color{black}\boldsymbol{w}}^{({\color{black}k})}))_{i}$. That being said, Algorithm \ref{SIGNSGD} is a special case of Algorithm \ref{noisySIGNSGD} where $b$ is large enough.} time duration requirement $T_{l}$ and the outage probability requirement $p_{out,m}$ for each worker. It can be observed from (\ref{probone}) that the probability of correct aggregation is independent of the outage probability $p_{out}(r_m)$. In the meantime, according to (\ref{OP5}), the feasible region of the energy consumption minimization problem with a smaller $p_{out,m}$ is a subset of that with a larger $p_{out,m}$. As a result, it is optimal to select $p_{out,m} = \min_{i}\{\frac{1}{2}-b|\nabla F_{m}({\color{black}\boldsymbol{w}}^{({\color{black}k})})_{i}|\}$ at the ${\color{black}k}$-th communication round. In this case, worker $m$ has to obtain $\nabla F_{m}({\color{black}\boldsymbol{w}}^{({\color{black}k})})$ before computing $p_{out,m}$. That being said, it has to finish the local computation of the gradients before solving the energy consumption minimization problem. {\color{black}To handle this challenge,} we consider a pre-determined local computation CPU frequency $f_m$ for each worker $m$'s energy consumption minimization problem (which can be realized by setting $f_{min,m}=f_{max,m}=f_m$ in (\ref{OP5})).}

To this end, during each communication round, each worker $m$ first computes its gradient $\nabla F_{m}({\color{black}\boldsymbol{w}}^{({\color{black}k})})$ and determines the optimal outage probability requirement $p_{out,m}$ as above. By solving the energy consumption minimization problem (\ref{OP5}), it obtains the communication parameters $P_{m}$ and $r_{m}$. Then, each worker $m$ estimates its outage probability $p_{out}(r_{m})$ (e.g., through (\ref{p_out})) and performs the pre-processing step, after which the processed information is transmitted to the parameter server over wireless links.

\section{Simulation Results}\label{Simulations}
\color{black}
\noindent In this section, we examine the performance of the proposed methods through extensive simulations. We implement the proposed method with a two-layer fully connected neural network {\color{black}with 101,770 trainable parameters} on the well-known MNIST dataset that consists of 10 categories ranging from digit ``0" to ``9" and a total of 60,000 training samples and 10,000 testing samples. In this case, the size of updates for each worker is $s_{m}=101,770$ bits for each communication round. It is assumed that there are 31 workers that collaboratively train a global model. For all the workers, {\color{black}similar to \cite{tran2019federated} and \cite{chen2020joint},} we set $\alpha_{m}=2\times10^{-28}$; $c_m=20$ cycles/bit; $D_m=5\times10^{7}$ bits; $N_{0}=10^{-8}$ W/Hz; $B_m=180$ kHz. {\color{black} In the training process, all the workers use a mini-batch of size 16 to compute their local gradients.\footnote{\color{black}Note that similar results are obtained for the full batch scenario where each worker evaluates its local gradient over the whole local dataset, which are omitted in the interest of space.}} In the scenario with homogeneous data distribution across the workers, each worker randomly samples 2000 training samples from the training dataset. In the scenario with heterogeneous data distribution across the workers, {\color{black}two heterogeneous data distribution schemes are considered. In the first scheme, we consider the extreme case where the whole training dataset is divided into 31 subsets (each assigned to one worker), and each subset contains training data with one label only. In the second scheme, we adopt the Dirichlet distribution $\text{Dir}(\alpha)$ scheme \cite{lin2020ensemble} to synthesize label distribution skew in experiments, where $\text{Dir}(\alpha)$ is the symmetric Dirichlet distribution with the concentration parameter $\alpha$ and a smaller $\alpha$ indicates more severe heterogeneity. The data allocation procedure consists of the following two steps: (1) for each worker $m$, a random vector $\boldsymbol{q}_m \sim \text{Dir}(\alpha)$ is drawn where $\boldsymbol{q}_m = [q_{m,1}, \cdots, q_{m,N}]^{\top}$ and $q_{m,n}$ indicates the {\color{black}ratio} of worker $m$'s local training dataset from the $n$-th class; (2) the training data are assigned to each worker $m$, with the number of training samples from the $n$-th class proportional to $q_{m,n}$.}

We compare the proposed method with a commonly adopted baseline for federated learning algorithms: FedAVG \cite{mcmahan2017communication} in which the model updates are transmitted in full precision. In FedAVG, the workers run $\tau$ local SGD steps before sharing their model updates with the parameter server. {\color{black}In our simulations, we examine the performance of FedAVG for $\tau\in \{1,5,10,20\}$ and present the results with the best testing accuracy.} Note that our aforementioned analyses on SignSGD consider the worst case scenario, i.e., all entries of the transmitted updates are incorrectly decoded in an outage. However, the worst case scenario for FedAVG is not easy to define, and it may fail to converge if the workers in outage share arbitrary information with the parameter server \cite{chen2017distributed}. With such consideration, we assume in this section that the parameter server can detect the outage and discard the packages in outage.
\begin{figure*}[ht]
\begin{minipage}[t]{0.33\linewidth}
\centering
\includegraphics[width=1\textwidth]{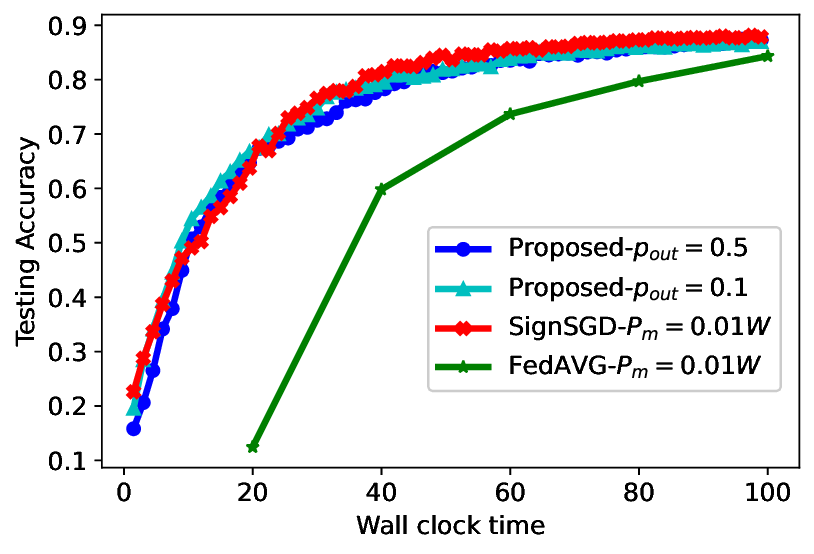}
\caption{Testing Accuracy of the Learning Algorithms.}
\label{TestingIID_P1_energyminimization}
\end{minipage}
\begin{minipage}[t]{0.33\linewidth}
\centering
\includegraphics[width=1\textwidth]{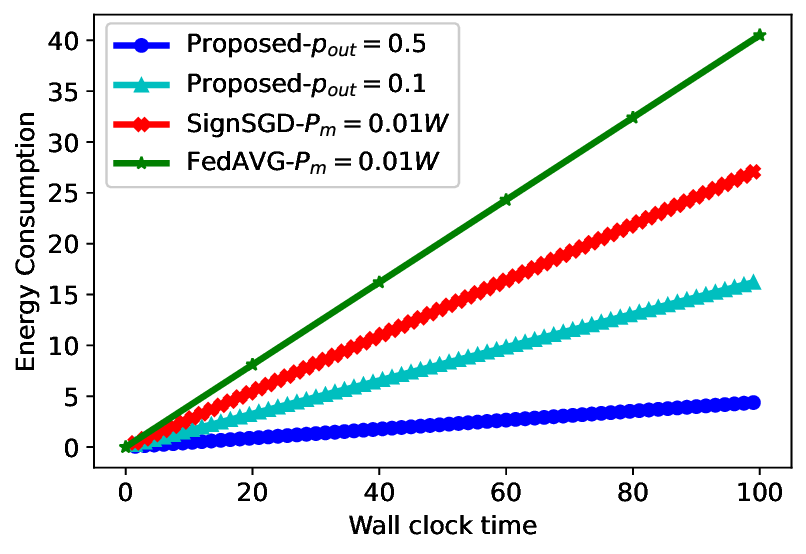}
\caption{Energy Consumption of the Learning Algorithms.}
\label{EnergyIID_P1_energyminimization}
\end{minipage}
\begin{minipage}[t]{0.33\linewidth}
\centering
\includegraphics[width=1\textwidth]{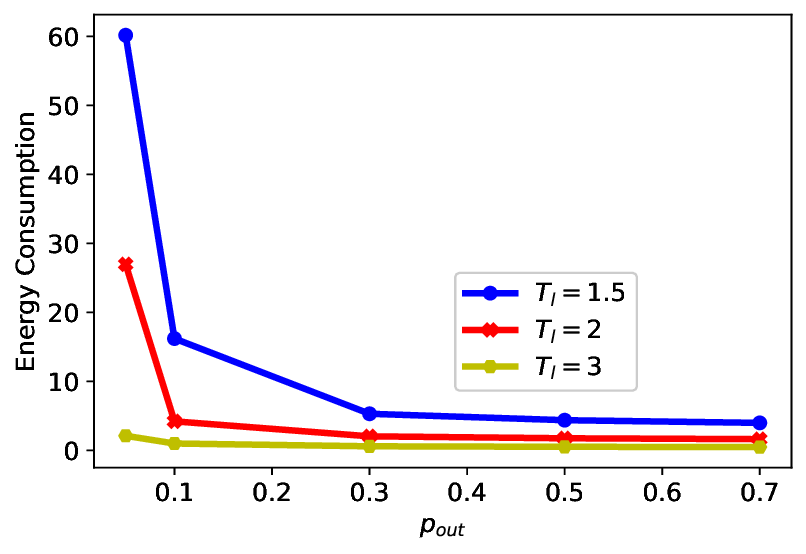}
\caption{Energy Consumption of the Proposed Method for Different $p_{out}$ and $T_{l}$.}
\label{EnergyIID_P1_Tlpout}
\end{minipage}
\end{figure*}
\subsection{Learning Performance Optimization Given Energy Consumption Constraint: Homogeneous}\label{Section:ExperimentHomo}
\noindent In this subsection, we examine the impact of the transmission power $P_{m}$ and the time duration for each communication round $T_{l}$. The CPU frequencies of all the workers are set to $2$ GHz. For SignSGD, the energy consumption upper limit is set as $E_m = 100$J for each communication round. For FedAVG, the workers are assumed to have unlimited battery (i.e., $E_{m} = \infty$) such that the outage probabilities of the workers are not limited by their energy consumption constraints. {\color{black}Fig. \ref{fig:fig}} show the testing accuracy of Algorithm \ref{SIGNSGD} with different $P_{m}$ and $T_{l}$, given a total training time $T_{total}=100$s. For SignSGD, the transmission rates $r_{m}$'s are given by (\ref{OptimalTransmissionRate}), while the configurations of ``SignSGD-Proposed" are given by the solution of (\ref{OP1}). For FedAVG, the transmission rates are selected such that the outage probabilities are minimized given the time duration for each communication round.

It can be seen from {\color{black}Fig. \ref{fig:fig}} that the proposed method for SignSGD works close to the optimal operation point for all the examined scenarios, which validates its effectiveness. Besides, it is shown that as $T_{l}$ increases, the learning performance of both algorithms first increases and then decreases. According to (\ref{OptimalTransmissionRate}), when $T_{l}$ increases, $r_{m}$ decreases and therefore the outage probability $p_{out}(r_{m})$ also decreases. However, in the meantime, as $T_{l}$ increases, the number of communication rounds decreases given the fixed training time. As a result, when the outage probability has a larger impact on the learning performance, increasing $T_{l}$ results in better performance. When $T_{l}$ is larger than a certain critical value, the number of communication rounds plays a more important role and therefore increasing $T_{l}$ leads to worse performance.

\begin{table}[t]
\caption{Testing Accuracy of the Learning Algorithms}
\vspace{-0.1in}
\label{AccuracyHomo_table}
\begin{center}
\begin{tabular}{ | m{3cm} | m{0.8cm} | m{0.8cm}|m{0.8cm}|}
\hline
$P_{m}$ & 0.005W & 0.01W & 0.05W \\
\hline
SignSGD-Proposed & 85.89\% & 87.01\% &89.19\%\\
\hline
FedAVG (optimal $T_{l}$) & 84.85\% & 84.31\% &87.96\%\\
\hline
\end{tabular}
\end{center}
\vspace{-0.1in}
\end{table}

\begin{table}[t]
\caption{Energy Consumption of the Learning Algorithms}
\vspace{-0.1in}
\label{EnergyHomo_table}
\begin{center}
\begin{tabular}{ | m{3cm} | m{0.8cm} | m{0.8cm}|m{0.8cm}|}
\hline
$P_{m}$ & 0.005W & 0.01W & 0.05W \\
\hline
SignSGD-Proposed & 21.56J & 29.11J&44.62J\\
\hline
FedAVG (optimal $T_{l}$) & 40.25J & 40.5J & 49.5J\\
\hline
\end{tabular}
\end{center}
\vspace{-0.1in}
\end{table}

Table \ref{AccuracyHomo_table} and Table \ref{EnergyHomo_table} show the testing accuracy and energy consumption of SignSGD with the proposed operation point and FedAVG with the optimal $T_{l}$. It can be observed that SignSGD outperforms FedAVG in both testing accuracy and energy consumption. Moreover, we note that there is no existing method in literature that can obtain the optimal $T_{l}$ and the number of local iterations $\tau$ for FedAVG.

\subsection{Energy Minimization Given Learning Performance Constraint: Homogeneous}
\noindent In this subsection, the performance of SignSGD with energy minimization is examined. We set $f_{min,m}=0.2$ GHz; $f_{max,m}=2$ GHz; $P_{min,m}=0$ W; $P_{max,m}=0.01$ W. For FedAVG, the CPU frequency $f_{m}$ and the transmission power $P_{m}$ are set to 2 GHz and 0.01 W, respectively, for all the workers such that the minimum outage probability can be achieved. For SignSGD and the proposed method, we set $T_{l}=1.5$s; while for FedAVG, we examine the performance of different $T_{l} \in \{3, 4, 5, 6, 7, 8, 9, 10, 15, 20, 25, 50\}$ and present the results with the best testing accuracy. For the proposed method, we set the same outage probability constraints for all the workers, i.e., $p_{out,m} = p_{out}, \forall m$.

Fig. \ref{TestingIID_P1_energyminimization}-\ref{EnergyIID_P1_energyminimization} show the testing accuracy and energy consumption of the learning algorithms. It can be observed that the proposed method with $p_{out}\in\{0.1,0.5\}$ outperforms FedAVG in both testing accuracy and energy consumption. Moreover, it is shown that increasing the outage probability of SignSGD does not change the testing accuracy much. This is because most of the workers share the correct signs in the homogeneous data distribution scenario, and therefore, the parameter server can update the model in the correct direction even when some workers are in the outage. Fig. \ref{EnergyIID_P1_Tlpout} shows the energy consumption of the proposed method for different $p_{out}$ and $T_{l}$. It can be observed that as $p_{out}$ and $T_{l}$ increase, the energy consumption decreases. This is because the feasible region of (\ref{OP5}) corresponding to a smaller $p_{out}$ and $T_{l}$ is a subset of that of (\ref{OP5}) corresponding to a larger $p_{out}$ and $T_{l}$.

\begin{table}[t]
\caption{Testing Accuracy of the Learning Algorithms}
\vspace{-0.1in}
\label{AccuracyHetero_table}
\begin{center}
\begin{tabular}{ | m{4.5cm} | m{0.8cm} | m{0.8cm}|m{0.8cm}|}
\hline
CPU Frequency $f$ & 1GHz & 2GHz & 3GHz \\
\hline
Algorithm \ref{noisySIGNSGD}-$b=100$, $T_{total}=250$s & 86.86\% & 88.32\% &87.23\%\\
\hline
SignSGD, $T_{total}=300$s & 64.07\% & 60.5\% &60.70\%\\
\hline
FedAVG (optimal $T_{l}$), $T_{total}=300$s  & 82.10\% & 82.95\% &86.19\%\\
\hline
\color{black}SignSGD, $p_{out}=0.1$, $T_{total}=300$s &  \multicolumn{3}{|c|}{61.23\%}\\
\hline
\color{black}Algorithm \ref{noisySIGNSGD}-$b=100$, $T_{total}=250$s, optimal $f$, $p_{out} = 0.1$ & \multicolumn{3}{|c|}{86.89\%} \\
\hline
\end{tabular}
\end{center}
\vspace{-0.1in}
\end{table}

\begin{table}[t]
\caption{Energy Consumption of the Learning Algorithms}
\vspace{-0.1in}
\label{EnergyHetero_table}
\begin{center}
\begin{tabular}{ | m{4.5cm} | m{0.8cm} | m{0.8cm}|m{0.8cm}|}
\hline
CPU Frequency & 1GHz & 2GHz & 3GHz \\
\hline
Algorithm \ref{noisySIGNSGD}-$b=100$, $T_{total}=250$s & 20.65J & 74.47J&158.78J\\
\hline
SignSGD, $T_{total}=300$s  & 25.0J & 90.0J&191.67J\\
\hline
FedAVG (optimal $T_{l}$), $T_{total}=300$s  & 23.1J & 93.45J&235.83J\\
\hline
\color{black}SignSGD, $p_{out}=0.1$, $T_{total}=300$s  & \multicolumn{3}{|c|}{16.45J} \\
\hline
\color{black}Algorithm \ref{noisySIGNSGD}-$b=100$, $T_{total}=250$s, optimal $f$, $p_{out} = 0.1$ & \multicolumn{3}{|c|}{13.65J} \\
\hline
\end{tabular}
\end{center}
\vspace{-0.1in}
\end{table}

\subsection{Energy Minimization Given Learning Performance Constraint: Heterogeneous}\label{heterogeneousoptimization}
\noindent In this subsection, the performance of Algorithm \ref{noisySIGNSGD} is examined. {\color{black}We first consider the extreme case where each worker stores only training data from one label}. We set $P_{min,m}=0$ W and $P_{max,m}=0.05$ W for all the workers. Table \ref{AccuracyHetero_table} and Table \ref{EnergyHetero_table} show the testing accuracy and energy consumption of the learning algorithms, respectively. The time duration for each communication round is set to $T_{l} = 1.5$s for both Algorithm \ref{noisySIGNSGD} and SignSGD. For FedAVG, we examine the performance of different $T_{l}$'s and present the results with the best testing accuracy. {\color{black}For SignSGD and FedAVG, we set the total training time $T_{total}=300$s, while for Algorithm \ref{noisySIGNSGD}, we present the results with a total training time of $T_{total} = 250$s. For SignSGD and FedAVG, we first set the transmission power $P_{m}=0.05W$ for all the workers (such that the minimum outage probability can be achieved) and examine their performance given different CPU frequencies (i.e., 1 GHz, 2 GHz and 3 GHz). In addition to the pre-determined CPU frequency cases where the workers obtain their gradients before computing their outage probabilities, we also examined the scenario where each worker optimizes the CPU frequency given a pre-determined outage probability requirement $p_{out,m}=0.1$ for Algorithm \ref{noisySIGNSGD} with $f_{min,m}=0.2$ GHz and $f_{max,m}=3$ GHz. Correspondingly, the performance of SignSGD is examined in the same setting. For FedAVG with optimal $T_{l}$, we note that the outage probabilities of the workers are larger than 0.1 for all the examined CPU frequencies. In this sense, adding the outage probability requirement essentially leads to the same results as $P_{m} = P_{max,m} = 0.05$ W and $f_{m} = f_{max,m} = 3$ GHz, given that the workers try their best to minimize their outage probabilities. It can be observed from Table \ref{AccuracyHetero_table} and Table \ref{EnergyHetero_table} that Algorithm \ref{noisySIGNSGD} outperforms FedAVG and SignSGD in both testing accuracy and energy consumption given a shorter training time. That being said, Algorithm \ref{noisySIGNSGD} improves the learning performance and reduces the energy consumption {\color{black}simultaneously, which} demonstrates its effectiveness. {\color{black}Moreover, we note that the condition $p_{out}(r_m) \leq \min_{i}\{\frac{1}{2}-b|\nabla F_{m}({\color{black}\boldsymbol{w}}^{({\color{black}k})})_{i}|\}$ may be violated for a pre-determined $p_{out,m}=0.1$. In this case, the pre-processing probabilities $p_{m}^{i}$ and $1-p_{m}^{i}$ in (\ref{pre-processing}) may fall out of the range $[0,1]$. We round them to 1 if they are positive and 0 otherwise.} Table \ref{AccuracyHetero_table} and Table \ref{EnergyHetero_table} show that the energy consumption of the workers can be further reduced by optimizing the CPU frequency $f_{m}$'s with a negligible loss in testing accuracy.}

{\color{black}We also examine the impact of data heterogeneity {\color{black}(i.e., $\alpha$)} in the Dirichlet distribution scheme. Table \ref{AccuracyHeteroDirich_table} and Table \ref{EnergyHeteroDirich_table} compare the testing accuracy and the energy consumption of the learning algorithms, respectively, given {\color{black}$f_{m}=2$} GHz for all the workers. {\color{black}It can be observed that the performance of SignSGD and FedAVG improves as $\alpha$
increases, which corresponds to more homogeneous data distribution. Meanwhile, Algorithm \ref{noisySIGNSGD} outperforms SignSGD and FedAVG in both testing accuracy and energy consumption in all the examined scenarios, which demonstrates its effectiveness in the presence of different levels of data heterogeneity. Moreover, it is worth mentioning that the optimal $T_{l}$ and number of local iterations for FedAvg are different given different $\alpha$, which leads to different energy consumption.}}

\begin{table}[ht]
\color{black}
\caption{Testing Accuracy of the Learning Algorithms}
\vspace{-0.1in}
\label{AccuracyHeteroDirich_table}
\begin{center}
\begin{tabular}{ | m{3cm} | m{0.8cm} | m{0.8cm}|m{0.8cm}|m{0.8cm}|}
\hline
$\alpha$ & 0.01 & 0.1 & 1 & 10\\
\hline
Algorithm \ref{noisySIGNSGD}-$b=100$, $T_{total}=250$s & 93.02\% & 92.35\% &92.31\%&92.63\%\\
\hline
SignSGD, $T_{total}=300$s & 82.99\% & 86.83\% &87.64\%&88.11\%\\
\hline
FedAVG (optimal $T_{l}$), $T_{total}=300$s  & 87.50\% & 89.38\% &91.81\%&91.88\%\\
\hline
\end{tabular}
\end{center}
\vspace{-0.1in}
\end{table}

\begin{table}[ht]
\color{black}
\caption{Energy Consumption of the Learning Algorithms}
\vspace{-0.1in}
\label{EnergyHeteroDirich_table}
\begin{center}
\begin{tabular}{ | m{3cm} | m{0.8cm} | m{0.8cm}|m{0.8cm}|m{0.8cm}|}
\hline
$\alpha$ & 0.01 & 0.1 & 1 & 10\\
\hline
Algorithm \ref{noisySIGNSGD}-$b=100$, $T_{total}=250$s & 74.70J & 74.70J&74.68J&74.66J\\
\hline
SignSGD, $T_{total}=300$s  & 90.00J & 90.00J&90.00J&90.00J\\
\hline
FedAVG (optimal $T_{l}$), $T_{total}=300$s  & 142.50J & 138.75J&165.00J&150.00J\\
\hline
\end{tabular}
\end{center}
\vspace{-0.1in}
\end{table}

\color{black}
\subsection{The Impact of Outage Probability Estimation Errors}
\noindent In the previous subsections, we assume that the channel gain follows Rayleigh distribution and the outage probability can be captured by (\ref{p_out}). In practice, however, the fading over the wireless channels is usually more complicated than Rayleigh fading. In this subsection, we demonstrate the robustness of the proposed method against outage probability estimation errors by examining the scenario where the actual outage probability is different from (\ref{p_out}).

\begin{table}[t]
\caption{Testing Accuracy of SignSGD: Homogeneous}
\vspace{-0.1in}
\label{homounreliabletableaccuracy}
\begin{center}
\begin{tabular}{ | m{2cm} | m{0.8cm} | m{0.8cm}|m{0.8cm}|}
\hline
$P_{m}$ & 0.005W & 0.01W & 0.05W \\
\hline
$\Delta = 0$ & 85.89\% & 87.01\% &89.19\%\\
\hline
$\Delta = 10\%$& 85.65\% & 88.26\% &89.29\%\\
\hline
$\Delta = 30\%$& 86.08\% & 88.47\% &89.17\%\\
\hline
$\Delta = 50\%$& 86.42\% & 88.56\% &89.25\%\\
\hline
\end{tabular}
\end{center}
\vspace{-0.1in}
\end{table}

For the homogeneous data distribution scenario, given the theoretically derived outage probability $p_{out}(r_{m})$, we assume that the actual outage probability is uniformly distributed in the interval $[p_{out}(r_{m}), p_{out}(r_{m})\times (1+\Delta)]$. That being said, the channel conditions are worse than the ones that each worker uses for parameter configuration. We note that this is conservative since the performance of the learning algorithms is supposed to improve as the probability of outage decreases. Table \ref{homounreliabletableaccuracy} shows the testing accuracy of SignSGD with the configurations given by the solution of (\ref{OP1}). The other parameters are the same as those in Section \ref{Section:ExperimentHomo}. It can be observed that the SignSGD with $\Delta\in\{10\%, 30\%, 50\%\}$ achieves comparable performance in testing accuracy to the $\Delta=0$ case. We note that, according to (\ref{OP5}), the energy consumption is independent of $\Delta$. Therefore, the energy consumption remains the same for different $\Delta$.

For the heterogeneous data distribution scenario, it is shown in Section \ref{heterogeneousoptimization} that decreasing the outage probability is not necessarily better for SignSGD. With such consideration, we also examine the scenario that the actual outage probability is uniformly distributed in the interval $[p_{out}(r_{m})\times(1-\Delta), p_{out}(r_{m})\times (1+\Delta)]$. Table \ref{heterounreliabletableaccuracy} shows the testing accuracy of Algorithm \ref{noisySIGNSGD} with different $\Delta$, given a total training time of $T_{total} = 300$s. The other parameters are the same as those in Section \ref{heterogeneousoptimization}. It can be observed that Algorithm \ref{noisySIGNSGD} achieves similar performance in testing accuracy for different $\Delta$, which demonstrates its robustness against outage probability estimation errors.

\begin{table}[t]
\caption{Testing Accuracy of Algorithm \ref{noisySIGNSGD}: Heterogeneous}
\vspace{-0.1in}
\label{heterounreliabletableaccuracy}
\begin{center}
\begin{tabular}{ | m{2cm} | m{0.8cm} | m{0.8cm}|m{0.8cm}|}
\hline
CPU Frequency & 1GHz & 2GHz & 3GHz \\
\hline
$\Delta = 0$ & 88.11\% & 88.85\% &88.61\%\\
\hline
$\Delta = 10\%$& 88.39\% & 87.23\% &88.55\%\\
\hline
$\Delta = 30\%$& 88.34\% & 87.02\% &88.60\%\\
\hline
$\Delta = 50\%$& 88.55\% & 87.35\% &88.74\%\\
\hline
$\Delta = \pm10\%$& 87.87\% &87.64\% &88.74\%\\
\hline
$\Delta = \pm30\%$& 87.90\% &88.11\% &88.62\%\\
\hline
$\Delta = \pm50\%$& 87.79\% & 88.18\% &87.60\%\\
\hline
\end{tabular}
\end{center}
\vspace{-0.1in}
\end{table}


\color{black}
\section{Conclusions}\label{Conclusion}
\noindent In this work, the implementation of FL algorithms over wireless networks is studied. In particular, considering that the workers have limited batteries, two optimization problems concerning the learning performance and the energy consumption of the workers are formulated and solved for appropriate local processing and communication parameter configuration. Furthermore, since SignSGD fails to converge in the scenario with heterogeneous data distribution across the workers, a stochastic sign based algorithm that can deal with data heterogeneity across the workers is proposed and the corresponding energy minimization problem is solved. It is shown that the proposed algorithm improves the learning performance with less energy consumption for the workers. The simulation results demonstrate the effectiveness of the proposed method.

\color{black}
\appendices
\section{Proof of Theorem \ref{T1}}\label{ProofOfT1}
\begin{proof}
According to Assumption \ref{A2}, we have
\begin{equation}\label{convergencee1}
\begin{split}
&F({\color{black}\boldsymbol{w}}^{({\color{black}k+1})}) - F({\color{black}\boldsymbol{w}}^{({\color{black}k})}) \\
&\leq <\nabla F({\color{black}\boldsymbol{w}}^{({\color{black}k})}), {\color{black}\boldsymbol{w}}^{({\color{black}k+1})}-{\color{black}\boldsymbol{w}}^{({\color{black}k})}> + \frac{L}{2}||{\color{black}\boldsymbol{w}}^{({\color{black}k+1})}-{\color{black}\boldsymbol{w}}^{({\color{black}k})}||^2 \\
& =-\eta <\nabla F({\color{black}\boldsymbol{w}}^{({\color{black}k})}), sign(\sum_{m=1}^{M}{\color{black}\hat{\boldsymbol{g}}}^{({\color{black}k})}_{m})> + \frac{L}{2}||\eta sign(\sum_{m=1}^{M}{\color{black}\hat{\boldsymbol{g}}}^{({\color{black}k})}_{m})||^2 \\
& = -\eta <\nabla F({\color{black}\boldsymbol{w}}^{({\color{black}k})}), sign(\sum_{m=1}^{M}{\color{black}\hat{\boldsymbol{g}}}^{({\color{black}k})}_{m})> + \frac{L\eta^2d}{2} \\
& = \eta ||\nabla F({\color{black}\boldsymbol{w}}^{({\color{black}k})})||_{1} + \frac{L\eta^2d}{2} - 2\eta\sum_{i=1}^{d}|\nabla F({\color{black}\boldsymbol{w}}^{({\color{black}k})})_{i}|\times\\
&\mathds{1}_{sign(\sum_{m=1}^{M}{\color{black}\hat{\boldsymbol{g}}}^{({\color{black}k})}_{m})_{i} = sign(\nabla F({\color{black}\boldsymbol{w}}^{({\color{black}k})})_{i})},
\end{split}
\end{equation}
in which $\nabla F({\color{black}\boldsymbol{w}}^{({\color{black}k})})_{i}$ is the $i$-th entry of the vector $\nabla F({\color{black}\boldsymbol{w}}^{({\color{black}k})})$. Taking expectation on both sides yields
\begin{equation}
\begin{split}
&\mathbb{E}[F({\color{black}\boldsymbol{w}}^{({\color{black}k})})-F({\color{black}\boldsymbol{w}}^{({\color{black}k+1})})] \geq  -\eta||\nabla F({\color{black}\boldsymbol{w}}^{({\color{black}k})})||_{1} - \frac{L\eta^2d}{2} + 2\eta \times \\
&\sum_{i=1}^{d}|\nabla F({\color{black}\boldsymbol{w}}^{({\color{black}k})})_{i}|P\big({\color{black}\tilde{\boldsymbol{g}}}^{({\color{black}k})}_{i} = sign(\nabla F({\color{black}\boldsymbol{w}}^{({\color{black}k})}))_{i}\big).
\end{split}
\end{equation}
\end{proof}

\section{Proof of Lemma \ref{Lemma1}}\label{proofOfL1}
\begin{proof}
According to the constraint $1 - e^{-\frac{(2^{r_m}-1)N_{0}B_{m}}{P_m}} \leq p_{out,m}$, when $\max\{r_{m}^{(1)},r_{m}^{(3)}\} \leq r_{m} \leq r_{m}^{(2)}$, it can be obtained that
\begin{equation}
P_{m} \geq -\frac{N_{0}B_{m}(2^{r_{m}}-1)}{\ln(1-p_{out,m})}.
\end{equation}
Since the objective function $\frac{\alpha_m}{2}c_{m}D_{m}f_{m}^2 + \frac{P_{m}s_{m}}{r_{m}B_{m}}$ is an increasing function of $P_{m}$, we have
\begin{equation}
P_{m}^{*} = -\frac{N_{0}B_{m}(2^{r_{m}}-1)}{\ln(1-p_{out,m})}.
\end{equation}
According to the constraint $\frac{c_{m}D_{m}}{f_{m}} + \frac{s_m}{r_{m}B_{m}} \leq T_l$, we have
\begin{equation}
f_{m} \geq \frac{c_{m}D_{m}}{T_l-\frac{s_m}{r_{m}B_{m}}}.
\end{equation}
In addition, the objective function is an increasing function of $f_{m}$. Therefore,
\begin{equation}
f^{*}_{m} = \max\bigg\{\frac{c_{m}D_{m}}{T_l-\frac{s_m}{r_{m}B_{m}}}, f_{min,m}\bigg\}
\end{equation}
\end{proof}

\section{Proof of Lemma \ref{Lemma2}}\label{proofOfL2}
\begin{proof}
According to the constraint $\frac{\alpha_m}{2}c_{m}D_{m}f_{m}^2 + \frac{P_{m}s_{m}}{r_{m}B_{m}} \leq E_{m}$, we have
\begin{equation}
r_{m} \geq \frac{P_{m}s_{m}}{B_{m}(E_{m}-\frac{\alpha_m}{2}c_{m}D_{m}f_{m}^2)}.
\end{equation}
According to the constraint $\frac{c_{m}D_{m}}{f_{m}} + \frac{s_m}{r_{m}B_{m}} \leq T_l$, we have
\begin{equation}
r_{m} \geq \frac{s_{m}f_{m}}{B_{m}f_{m}T_{l}-B_{m}c_{m}D_{m}}.
\end{equation}
In addition, it can be shown that the objective function $\frac{M - 2\sum_{m=1}^{M}p_{out}(r_m)}{\sqrt{T_l}}$ is a decreasing function of $r_{m}$. Therefore,
\small
\begin{equation}
r^{*}_{m} = \max\bigg\{\frac{P_{m}s_{m}}{B_{m}(E_{m}-\frac{\alpha_m}{2}c_{m}D_{m}f_{m}^2)}, \frac{s_{m}f_{m}}{B_{m}f_{m}T_{l}-B_{m}c_{m}D_{m}}\bigg\}.
\end{equation}
\normalsize
\end{proof}
\color{black}
\section{Proof of Lemma \ref{LemmaHeterogeneous2}}\label{ProofofLemma32}
\noindent {\color{black}In the interest of space, we {\color{black}present} the proof of Lemma \ref{LemmaHeterogeneous2} when $M$ is odd. The corresponding result for even $M$ can be obtained following a similar strategy.}
\begin{proof}

Define a series of random variables $\{\hat{X}_{m,i}\}_{m=1}^{M}$ given by
\begin{equation}
\hat{X}_{m,i} =
\begin{cases}
1, \hfill ~~~\text{if $sign({\color{black}\hat{\boldsymbol{g}}_{m}^{({\color{black}k})}})_{i} \neq sign(\nabla F_{m}({\color{black}\boldsymbol{w}}^{({\color{black}k})}))_{i}$}, \\
0, \hfill ~~~\text{if $sign({\color{black}\hat{\boldsymbol{g}}_{m}^{({\color{black}k})}})_{i}  = sign(\nabla F_{m}({\color{black}\boldsymbol{w}}^{({\color{black}k})}))_{i}$}.
\end{cases}
\end{equation}
It can be verified that
\begin{equation}
\color{black}
\begin{split}
P(\hat{X}_{m,i} = 1) &= (1-p_{m}^{i})p_{out}(r_{m}) + p_{m}^{i}(1-p_{out}(r_m)) \\
&=\frac{1}{2} - b|\nabla F_{m}({\color{black}\boldsymbol{w}}^{({\color{black}k})})_i|.
\end{split}
\end{equation}


\begin{equation}
sign({\color{black}\hat{\boldsymbol{g}}_{m}^{({\color{black}k})}})_{i} =
\begin{cases}
\hfill 1, \hfill \text{with probability $\frac{1+b\nabla F_{m}({\color{black}\boldsymbol{w}}^{({\color{black}k})})_{i}}{2}$},\\
\hfill -1, \hfill \text{with probability $\frac{1-b\nabla F_{m}({\color{black}\boldsymbol{w}}^{({\color{black}k})})_{i}}{2}$},\\
\end{cases}
\end{equation}

Further define a series of random variables $\{\hat{Z}_{m,i}\}_{m=1}^{M}$ given by
\begin{equation}
\hat{Z}_{m,i} =
\begin{cases}
\hfill 1, \hfill &\text{if $sign({\color{black}\hat{\boldsymbol{g}}_{m}^{({\color{black}k})}})_{i} =1$},\\
\hfill 0, \hfill &\text{if $sign({\color{black}\hat{\boldsymbol{g}}_{m}^{({\color{black}k})}})_{i} = -1$.}
\end{cases}
\end{equation}

Let $\hat{Z}_{i} = \sum_{m=1}^{M}\hat{Z}_{m,i}$, then
\begin{equation}
\begin{split}
P\bigg(sign\bigg(\frac{1}{M}\sum_{m=1}^{M}sign({\color{black}\hat{\boldsymbol{g}}_{m}^{({\color{black}k})}})_{i}\bigg)=1\bigg) &= P\bigg(\hat{Z}_{i} \geq \frac{M}{2}\bigg)\\
&= \sum_{H = \lceil\frac{M+1}{2}\rceil}^{M}P(\hat{Z}_{i} = H).
\end{split}
\end{equation}

In addition, let $u_{m} = \nabla F_{m}({\color{black}\boldsymbol{w}}^{({\color{black}k})})_{i}$, we have
\begin{equation}\label{SPhatZ}
\begin{split}
&P(\hat{Z}_{i}=H) \\
&= \frac{\sum_{A \in F_H}\prod_{i \in A}(1+bu_{i})\prod_{j \in A^{c}}(1-bu_{j})}{2^{M}} \\
&= \frac{a_{M,H}b^{M} + a_{M-1,H}b^{M-1} + \cdots + a_{0,H}b^{0}}{2^{M}},
\end{split}
\end{equation}
in which $F_H$ is the set of all subsets of $H$ integers that can be selected from $\{1,2,3,...,M\}$; $a_{m,H}, \forall 0\leq m \leq M$ is some constant. It can be easily verified that $a_{0,H} = {M \choose H}$.

When $b$ is sufficiently small, $P(\hat{Z}_{i}=H)$ is dominated by the last two terms in (\ref{SPhatZ}). In particular, $\forall m$, we have
\begin{equation}
\begin{split}
&\sum_{A \in F_H}\prod_{i \in A}(1+bu_{i})\prod_{j \in A^{c}}(1-bu_{j})\\ &= (1+bu_{m})\sum_{A \in F_H, m\in A}\prod_{i \in A/\{m\}}(1+bu_{i})\prod_{j \in A^{c}}(1-bu_{j}) \\
&+ (1-bu_{m})\sum_{A \in F_H, m\notin A}\prod_{i \in A}(1+bu_{i})\prod_{j \in A^{c}/\{m\}}(1-bu_{j}).
\end{split}
\end{equation}
As a result, when $\lceil\frac{M+1}{2}\rceil \leq H \leq M-1$, the $u_{m}$ related term in $a_{1,H}$ is given by $\big[{M-1 \choose H-1} - {M-1 \choose H}\big]u_{m}$; when $H = M$, the $u_{m}$ related term in $a_{M-1,H}$ is given by $\big[{M-1 \choose H-1}\big]u_{m}$. By summing over $m$, we have
\begin{equation}
\begin{split}
  a_{1,H} = \bigg[{M-1 \choose H-1} - {M-1 \choose H}\bigg]\sum_{m=1}^{M}u_{m}, \\
\text{if}~~\bigg\lceil\frac{M+1}{2}\bigg\rceil \leq H \leq M-1,
\end{split}
\end{equation}
and
\begin{equation}
a_{1,H} = \bigg[{M-1 \choose H-1}\bigg]\sum_{m=1}^{M}u_{m},~~\text{if}~~H = M.
\end{equation}

By summing over $H$, we have
\begin{equation}
\sum_{H = \lceil\frac{M+1}{2}\rceil}^{M}a_{0,H} = \sum_{H = \lceil\frac{M+1}{2}\rceil}^{M}{M \choose H} = 2^{M-1},
\end{equation}
\begin{equation}
\sum_{H = \lceil\frac{M+1}{2}\rceil}^{M}a_{1,H} = {M-1 \choose \lceil\frac{M+1}{2}\rceil-1}\sum_{m=1}^{M}u_{m}
\end{equation}
As a result,
\begin{equation}
\begin{split}
&P\bigg(\hat{Z}_{i} \geq \frac{M}{2}\bigg) \\
&= \sum_{H = \lceil\frac{M+1}{2}\rceil}^{M}P(\hat{Z}_{i} = H) \\
&= \frac{2^{M-1} + {M-1 \choose \lceil\frac{M+1}{2}\rceil-1}\sum_{m=1}^{M}u_{m}b}{2^{M}} + O\bigg(\frac{b^2}{2^{M}}\bigg)\\
& = \frac{1}{2} + \frac{{M-1 \choose \lceil\frac{M+1}{2}\rceil-1}}{2^{M}}\sum_{m=1}^{M}u_{m}b + O\bigg(\frac{b^2}{2^{M}}\bigg),
\end{split}
\end{equation}
which completes the proof.
\end{proof}

\color{black}
\bibliography{Ref-FL}
\bibliographystyle{IEEEtran}
\end{document}